\newcommand\sReencell{\cellcolor{green!10}}
\def\eqref#1{equation~\ref{#1}}
\def\1{\bm{1}}
\def\rvb{{\mathbf{b}}}
\def\rmM{{\mathbf{M}}}
\def\rmW{{\mathbf{W}}}
\def\rmZ{{\mathbf{Z}}}
\def\vb{{\bm{b}}}
\def\vo{{\bm{o}}}
\def\vx{{\bm{x}}}
\def\vz{{\bm{z}}}
\def\mR{{\bm{R}}}
\DeclareMathAlphabet{\mathsfit}{\encodingdefault}{\sfdefault}{m}{sl}
\SetMathAlphabet{\mathsfit}{bold}{\encodingdefault}{\sfdefault}{bx}{n}
\def\gF{{\mathcal{F}}}
\def\gH{{\mathcal{H}}}
\def\gL{{\mathcal{L}}}
\def\gN{{\mathcal{N}}}
\def\gV{{\mathcal{V}}}
\def\sD{{\mathbb{D}}}
\def\sP{{\mathbb{P}}}
\def\sQ{{\mathbb{Q}}}
\def\sR{{\mathbb{R}}}
\def\sS{{\mathbb{S}}}
\def\sX{{\mathbb{X}}}
\newcommand{\cp}[2]{\textrm{P}{\left( #1 \mid #2 \right)}}
\newcommand{\p}[1]{\textrm{P}{\left( #1 \right)}}
\newcommand{\cq}[2]{\textrm{Q}{\left( #1 \mid #2 \right)}}
\newcommand{\qregi}{\textrm{Q}_{shap_i}}
\newcommand{\cqregi}[2]{\textrm{Q}_{shap_i}{\left( #1 \mid #2 \right)}}
\newcommand{\cqreg}[2]{\textrm{Q}_{shap_d}{\left( #1 \mid #2 \right)}}
\newcommand{\qclass}{\textrm{Q}_{logit}}
\newcommand{\qclassi}{\textrm{Q}_{logit_i}}
\newcommand{\cqclassi}[2]{\textrm{Q}_{logit_i}{\left( #1 \mid #2 \right)}}
\newcommand*\diff{\mathop{}\!\mathrm{d}}
\newcommand{\Ex}[2]{\mathbb{E}_{#1}\left\{ #2 \right\}}
\newcommand{\N}[1]{\mathcal{N}\left( #1\right)}
\newcommand{\ie}{i.e., }
\newcommand{\eg}{e.g., }
\newcommand{\dshap}{D_{SHAP}}
\newcommand{\hatdshap}{\hat{D}_{SHAP}}
\newcommand{\Real}{\mathcal{R}}
\newcommand{\expp}[1]{\exp\left\{ #1 \right\}}
\newcommand{\kl}[2]{D_\mathrm{KL}\left(#1 \| #2\right)}
\newlist{Properties}{enumerate}{2}
\setlist[Properties]{label=Property \arabic*., font=\textbf, itemindent=*}
\newcommand{\red}[1]{\textcolor{red}{#1}}
\newtheorem{theorem}{Theorem}[section]
\newtheorem{proposition}[theorem]{Proposition}
\newtheorem{remark}[theorem]{Remark}
\title{Probabilistic Shapley Value Modeling and Inference}
\author{\\
\name Mert Ketenci \email mk4139@columbia.edu \\
\addr Department of Computer Science\\
Columbia University\\
New York, NY
\vspace{1.5ex}
\\
\name I\~{n}igo Urteaga \email iurteaga@bcamath.org \\
\addr
BCAM --- Basque Center for Applied Mathematics\\
Bilbao, Spain
\vspace{1.5ex}
\\
\name Victor A. Rodriguez \email var2112@cumc.columbia.edu\\
\addr Department of Biomedical Informatics\\
Columbia University\\
New York, NY
\vspace{1.5ex}
\\
\name Noémie Elhadad \email ne60@cumc.columbia.edu\\
\addr Department of Biomedical Informatics \& Computer Science\\
Columbia University\\
New York, NY
\vspace{1.5ex}
\\
\name Adler Perotte \email ajp2120@cumc.columbia.edu\\
\addr Department of Biomedical Informatics\\
Columbia University\\
New York, NY
\vspace{1.5ex}
}
\begin{document}
\maketitle
\begin{abstract}
We propose probabilistic Shapley inference (PSI),
a novel probabilistic framework to model and infer
sufficient statistics of feature attributions in flexible predictive models,
via latent random variables whose mean recovers Shapley values.
PSI enables efficient, scalable inference
over input-to-output attributions, and their uncertainty,
via a variational objective that jointly trains a predictive (regression or classification) model and its attribution distributions.
To address the challenge of marginalizing over variable-length input feature subsets in Shapley value calculation,
we introduce a masking-based neural network architecture,
with a modular training and inference procedure.
We evaluate PSI on synthetic and real-world datasets, showing that
it achieves competitive predictive performance compared to strong baselines,
while learning feature attribution distributions ---centered at Shapley values---
that reveal meaningful attribution uncertainty across data modalities.
\end{abstract}

\section{Introduction}
Learning feature attributions, \ie quantifying how individual input features contribute to a (black-box) model’s prediction, is a fundamental problem in interpretable and explainable machine learning. Shapley values \citep{shapley1953value}, originating from cooperative game theory and the study of transferable utility, have become widely adopted in machine learning as a principled approach for feature attributions \citep{lundberg2017unified, lundberg2020local, covert2021improving}. For a supervised learning model, the Shapley value for feature $d$ is defined as its expected marginal contribution across all possible coalitions
---where each coalition $S$ is a subset of features that excludes feature $d$---
see Equation~\ref{eq:shapley}.
Shapley value is the unique \emph{removal based attribution method} \citep{covert2021explaining} satisfying four key explainability axioms:
Efficiency, Symmetry, Dummy (Null Player), and Additivity (Linearity);
underpinning their appeal in model explanation \citep{merrick2020explanation}.
We refer to \cite{rozemberczki2022shapley} for details on their theoretical foundation.

Despite its desirable properties and widespread application, most existing Shapley-based feature attribution methods operate on point estimates, thereby collapsing information in a dataset into a single summary statistic.
This approach ignores the inherent uncertainty in the data, which can be caused by noise, limited observations, or feature correlations. 
For instance, when model predictions are uncertain, deterministic Shapley values may give a false sense of confidence about the importance of specific features.
In addition, we are often not interested in identifying features with only high attributions,
but also those with consistently low or negligible attributions (\ie \emph{dummy features}), 
so we can isolate and focus on the true drivers of a model’s decisions.
Overlooking negligible or uncertain attribution information can lead to misleading or unstable attributions,
of particular relevance in high-stakes domains such as healthcare or policymaking,
where attribution reliability is paramount. 
Therefore, it is crucial to characterize (I) the inherent noise in the data, (II) the model's predictive uncertainty, and (III) how it all propagates to and impacts feature attributions.

In this work, we describe the attribution of a model's inputs to outputs through latent random variables, centered at Shapley values, via a probabilistic generative model of observed data,
that is fully learnable (\ie the model and the attribution distributions)
via an efficient, stochastic variational inference procedure.

Our main contributions are as follows:
\begin{enumerate}
    \item \textbf{A Novel, Input-Conditional Prior Distribution Modeling (Uncertain) Shapley Feature Attributions.}
    We introduce in Section \ref{sec:psi} an input-conditional prior distribution over feature attributions, centered around Shapley values.
    We pose that the observed outputs are generated through a linear combination of attributions drawn from this Shapley prior that parameterize a Gaussian or a Bernoulli likelihood for regression and classification tasks, respectively (see Figure~\ref{fig:gm}).
    
    \item \textbf{Efficient Variational Inference for Simultaneous Model and Feature Attribution Learning.} We overcome challenges in model learning (non-analytical likelihoods and computationally demanding operations in feature size)
    by devising, in Section \ref{ssec:variational_psi}, 
    a variational inference framework for Probabilistic Shapley Inference (PSI).
    We derive a stochastic estimator of a (tighter) variational objective that enables efficient and scalable learning of the model and the attribution Shapley prior, simultaneously.
    
    \item \textbf{Efficient Processing of Variable Length Inputs.} A key challenge for removal-based attribution methods (\eg Shapley values) lies in efficiently handling variable-length inputs and computing their marginals. 
    In Section \ref{sec:arch}, we introduce a novel, masked embedding neural network (MENN) architecture that supports flexible modeling of variable-length, continuous and discrete inputs.
    MENN leverages the inherent parallelism of neural networks, enabling efficient computation on modern hardware (GPUs) by avoiding the need for sequential processing over feature subsets.
    \end{enumerate}

We evaluate and assess the \emph{explainability and predictive performance} of the probabilistic Shapley inference (PSI) framework via empirical studies with the following aims:
    \begin{enumerate}
        \item \textbf{Objective 1.} To demonstrate that PSI recovers a distribution over attributions that closely reflects the true data-generating process,
        according to the Shapley value framework,
        via the empirical study presented in Section~\ref{sec:case1}.
        \item \textbf{Objective 2.}
        To showcase, through two illustrative case studies in Section~\ref{sec:case2},
        how probabilistic Shapley attributions that incorporate attribution uncertainty via coverage intervals provide more meaningful and informative explainability insights across multiple data modalities. 
        \item \textbf{Objective 3.} To establish, with results in Section~\ref{sec:psi_menn}, that the masking-based network presented in Section \ref{sec:arch}, which parameterizes the sufficient statistics of the feature attribution distribution, produces more accurate predictive data estimates than standard baseline methods.
        \item \textbf{Objective 4.} To verify PSI's competitive predictive performance across regression and classification tasks in Section~\ref{sec:predictive},
        in par with several well-established explainable and black-box baselines.
    \end{enumerate}

\section{Connections and Differences with Related Work}
\label{sec:related_work}

In this work, we address several core challenges in the application, computation and interpretation of feature attribution via Shapley values.
We review and contextualize here the literature at the intersection of Shapley values and machine learning, with a focus on methods that address their computational, modeling and estimation challenges.
Despite its desirable properties and widespread application,
Shapley value computation is burdened with inherent difficulties, 
due to its computational complexity \citep{van2022tractability}, the difficulty in estimating model marginals \citep{covert2021explaining},
and their lack of uncertainty quantification.

Computation of exact Shapley values for a given feature $d$ is expensive, as it requires evaluating all possible feature coalitions,
\ie iterating over $S \in 2^{[D]\setminus\{d\}}$, where $2^{[D]\setminus\{d\}}$ represents the power set over $\{1,2 \cdots,d-1, d+1, \cdots, D\}$.
To mitigate this, recent work has framed the problem as a post-hoc weighted least squares estimation \citep{lundberg2017unified, covert2020understanding, adebayo2021post}. While such methods offer useful insights into black-box model behavior, they come with notable limitations: they can be highly sensitive to small input perturbations \citep{ghorbani2019interpretation}, may fail to faithfully reflect the decision boundaries of the original model \citep{rudin2019stop}, and often require extensive sampling to produce reliable attributions \citep{jethani2021fastshap}.
FastSHAP \citep{jethani2021fastshap} addresses the computational bottleneck of Shapley estimation by learning attributions in a single forward pass. However, it relies on training multiple neural networks in sequence, resulting in significant computational overhead.
In contrast, we propose training \emph{a single model} in which feature attributions are \emph{embedded directly into the generative process}, rather than relying on post-hoc attributions.

A second challenge in Shapley value estimation is dealing with feature subsets of varying lengths.
Typically, feature removal is treated as a marginalization process, where the model output is averaged over the omitted features. A common strategy is to marginalize removed features using an empirical or approximate input distribution. Existing methods approximate these marginals with distributions such as the product of independent input marginals, the empirical joint distribution, or uniform sampling over the input space \citep{datta2016algorithmic, lundberg2017unified, strumbelj2010efficient}. However, these approximations can lead to misleading attributions when input features exhibit strong correlations. For a comprehensive discussion of marginalization strategies and their limitations, we refer the reader to \citet{covert2021explaining}.
Recent work has explored the use of baseline values as substitutes for marginalized features in Shapley value estimation:
\citet{sundararajan2020many} proposed replacing missing features with fixed baselines, while \citet{covert2021explaining} showed that training models with independently replaced features can approximate marginalization under the input conditional distribution. \citet{jethani2021fastshap} further extended this idea by selecting baseline values outside the observed data distribution to enhance distinguishability. However, \citet{covert2021explaining} also highlighted the inherent difficulties of using baselines in continuous domains, where poorly chosen baselines can distort marginal estimates and lead to inaccurate attributions. In this work, we address these issues by
\emph{defining both feature embeddings and baseline values in a learned latent space},
and demonstrate that this leads to improved quality over prior approaches.

A third challenge and core limitation lies in the common treatment of Shapley values as deterministic quantities.
This view is largely influenced by the prevalence of non-probabilistic machine learning predictive models that produce single-point predictions ---and explanations.
However, this simplification can obscure important insights, underscoring the need to explicitly account for the uncertainty inherent in explanations \citep{chan2020unlabelled, alaa2020discriminative, lee2020robust}.
Although some post-hoc methods attempt to quantify uncertainty in Shapley estimates, they focus on capturing it through its variance estimation arising from ad-hoc approximations \citep{covert2021improving, slack2021reliable}.
As a result, they fail to reflect the aleatoric uncertainty present in data,
which we here characterize via a distribution over feature attributions.
Recent work by \citet{chau2023explaining} is similar in spirit to ours,
in that they assume a Shapley value centered Gaussian Process (GP) prior to,
after observing some data,
use the GP posterior to distribute its predictive uncertainty to feature attributions.
However, the framework we present here differs fundamentally from GP-SHAP:
rather than providing post-hoc explanations under a Gaussian process prior,
we put forward an end-to-end probabilistic learning framework
that \emph{jointly trains a predictive model and a conditional prior over feature attributions}.

\section{Probabilistic Shapley Inference (PSI)}
\label{sec:psi}

\paragraph{Definitions and notations.}
Let $\sX \subseteq \sR^D$ denote the input space of a model over $D$ features.
We define a function $f_d$ that operates on variable-length subsets of features as
\begin{equation}
    f_d: \bigcup_{S \subseteq [D]} \sX_S \rightarrow \sR \;, \quad \text{with} \quad f_d(\emptyset) = 0,\label{eq:req1}
\end{equation}
where $[D] = \{1, 2, \dots, D\}$ is the index set of all input dimensions, $S \subseteq [D]$ is a feature subset, and $\sX_S = \varprod_{d \in S} \sX_d$ denotes the corresponding subspace.
We define $\sX_\emptyset$ as a singleton (e.g., $\{\emptyset\}$) to allow calls such as $ f_d(\vx_\emptyset) = f_d(\emptyset)$.

Without loss of generality, we assume an additive output structure $f(\vx_s) = \sum_{d=1}^D f_d(\vx_s)$ over input subsets,
and we write $f(\vx) = \sum_{d=1}^D f_d(\vx)$ for the output given full input $\vx$.

\subsection{PSI's Generative Model: Conditional Shapley Priors}
\label{ssec:model}

PSI characterizes feature attributions via probabilistic modeling, 
where we formulate the attribution of each input feature as a random variable conditioned on the input $\vx$.
Precisely, we assert a conditional prior distribution $\varphi_d | \vx$ for the contribution of each feature $d$ of the the input $\vx$:
\begin{equation}
    \varphi_d \mid \vx \sim \cp{\varphi_d}{\vx} = \gN(\phi_d(\vx), \sigma_d^2(\vx)),
    \label{eq:shapley_prior}
\end{equation}
where its conditional mean $\phi_d(\vx)$ is given by the Shapley-weighted marginal contribution of $f$:
\begin{equation}
    \phi_d(\vx) = \sum_{S \subseteq [D] \setminus \{d\}} \p{S} \left( f(\vx_{S\cup \{d\}}) - f(\vx_s) \right), \quad \text{such that} \quad f(\vx_s) = \int f(\vx) \diff{\cp{\vx}{\vx_s}} \;, \label{eq:shapley}
\end{equation}
with Shapley kernel $\p{S} = \frac{|s|!(D - |s| - 1)!}{D!}$.

We refer to $\cp{\varphi_d}{\vx}$ as the \textbf{Shapley prior},
and denote samples $\varphi_d$ drawn from this distribution as
\emph{stochastic Shapley values} (SSV). 
We write $\Phi = \{\varphi_{d}\}_{d=1}^D$
for the concatenation of SSVs for all input features.
By construction, $\varphi_d(\vx)$ is distributed around the Shapley values
and satisfies the following properties.

\begin{theorem}[Axiomatic properties of $\phi(\vx)$]
\label{th:phi_properties}
The expected value $\phi(\vx)$ of the Shapley prior satisfies the following properties,
by the definitions of $f(\vx_s)$ and the Shapley kernel $\p{S}$:
\begin{adjustwidth}{2em}{0em} 
\begin{Properties}
  \item \label{prop:1} {(\textbf{Efficiency})} $\sum_{d=1}^D \phi_d(\vx) = f(\vx)$. 
  We start with  $\sum_{d=1}^D \sum_{S \subseteq [D] \setminus \{d\}} \p{S} \left( f(\vx_{S\cup \{d\}}) - f(\vx_s) \right) = \sum_{T \subseteq [D]} c(T) f(\vx_T)$,
  where   $c(T) = \sum_{j \in T} \p{T \setminus\{j\}} -  \sum_{j \notin T} \p{T}$, which is $0$ except for $c(D) = 1$, and $C(\emptyset) = -1$.
  Hence, $\sum_{d=1}^D \phi_d(\vx) = f(\vx) - f(\emptyset) = f(\vx)$ by definition in Equation \ref{eq:req1}. 

\vspace{1ex}  
  \item {(\textbf{Symmetry})} if $f(\vx_{S\cup d_1}) = f(\vx_{S\cup d_2})$, for all $s \in [D]$, then $\phi_{d_1} (\vx) = \phi_{d_2} (\vx)$, by application of Equation \ref{eq:shapley}.

\vspace{1ex}
  \item {(\textbf{Dummy})} if $f(\vx_{S\cup d}) = f(\vx_s)$,  for all $s \in [D]$, then $\phi_d(\vx)$ is $0$, by direct use of Equation \ref{eq:shapley}.

\vspace{1ex}
  \item {(\textbf{Additivity})} For two functions $f^1$ and $f^2$,
  the contribution to $f = f^1 + f^2$ of feature $d$,
  due to the linearity of the expectation,
  is $\phi_d(\vx) = \phi^1_d(\vx) + \phi^2_d(\vx)$.
\end{Properties}
\end{adjustwidth}
\end{theorem}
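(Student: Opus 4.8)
The plan is to treat the four properties separately, since they rest on different facts: Efficiency on a combinatorial identity for the Shapley kernel, Symmetry on the weights depending only on coalition size, Dummy on the definition of a null feature, and Additivity on linearity of the marginalization operator $f(\vx_s) = \int f(\vx)\diff{\cp{\vx}{\vx_s}}$ and of the finite sum. Only Efficiency requires real work; the other three are essentially definitional once the first is in place.

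For \textbf{Efficiency}, I would first expand $\sum_{d=1}^D \phi_d(\vx)$ as a double sum over features $d$ and coalitions $S \subseteq [D]\setminus\{d\}$, and then reindex by collecting, for each subset $T \subseteq [D]$, the total coefficient $c(T)$ multiplying $f(\vx_T)$. A given $T$ picks up a positive weight $\p{T\setminus\{j\}}$ for each $j \in T$ (from the term where $T = S\cup\{d\}$ with $d=j$) and a negative weight $-\p{T}$ for each $j \notin T$ (from the term where $T = S$), so $c(T) = \sum_{j\in T}\p{T\setminus\{j\}} - \sum_{j\notin T}\p{T}$. The crux is to use the factorial form of $\p{S}$ to show that, for $|T| = t$, both partial sums equal $\tfrac{t!(D-t)!}{D!}$: the first is $t$ copies of $\tfrac{(t-1)!(D-t)!}{D!}$ and the second is $D-t$ copies of $\tfrac{t!(D-t-1)!}{D!}$. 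Hence $c(T) = 0$ for every $0 < t < D$, and the two boundary cases, checked directly, give $c([D]) = 1$ and $c(\emptyset) = -1$. Substituting $f(\vx_\emptyset) = f(\emptyset) = 0$ from Equation~\ref{eq:req1} then yields $\sum_{d=1}^D \phi_d(\vx) = f(\vx)$.

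For \textbf{Symmetry}, I would exhibit a weight-preserving pairing of coalitions induced by transposing $d_1$ and $d_2$. Writing out both Shapley sums from Equation~\ref{eq:shapley}, the terms indexed by coalitions $S$ containing neither feature match immediately under the hypothesis $f(\vx_{S\cup\{d_1\}}) = f(\vx_{S\cup\{d_2\}})$, since $\p{S}$ depends on $S$ only through $|S|$. For coalitions containing the other special feature I would pair $S'\cup\{d_2\}$ in $\phi_{d_1}(\vx)$ with $S'\cup\{d_1\}$ in $\phi_{d_2}(\vx)$: equal cardinality gives equal weights, the augmented set $S'\cup\{d_1,d_2\}$ is common to both, and the hypothesis equates the remaining terms; so every term pairs off and $\phi_{d_1}(\vx) = \phi_{d_2}(\vx)$. \textbf{Dummy} is then immediate, since if $f(\vx_{S\cup\{d\}}) = f(\vx_s)$ for all $S$, every marginal contribution in Equation~\ref{eq:shapley} is zero and hence $\phi_d(\vx) = 0$. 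Finally, \textbf{Additivity} follows because the marginalization integral and the Shapley-weighted sum are both linear, so decomposing each marginal contribution of $f = f^1 + f^2$ termwise gives $\phi_d(\vx) = \phi_d^1(\vx) + \phi_d^2(\vx)$.

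I expect the factorial bookkeeping in Efficiency to be the main obstacle, specifically confirming the interior cancellation and getting both boundary coefficients right; Symmetry needs only care in setting up the pairing so that coalitions containing the ``other'' symmetric feature are matched correctly, while Dummy and Additivity require essentially no computation.
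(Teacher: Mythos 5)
Your proposal is correct and follows essentially the same route as the paper: the same reindexing of the double sum into $\sum_{T \subseteq [D]} c(T) f(\vx_T)$ with $c(T) = \sum_{j \in T} \p{T \setminus\{j\}} - \sum_{j \notin T} \p{T}$ for Efficiency, and the same direct appeals to Equation~\ref{eq:shapley} and linearity for Symmetry, Dummy, and Additivity. Your explicit factorial verification that both partial sums equal $\tfrac{t!(D-t)!}{D!}$ for $0 < t < D$, and your cardinality-preserving pairing for Symmetry, simply fill in details the paper leaves implicit.
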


\paragraph{Observation likelihood.}
In practice, we do not observe random variables $\varphi_d$ ---nor their realizations---
but observations that depend on these latent variables.
In this work,
we consider data likelihood functions for observations corresponding to regression and binary classification\footnote{
Extensions to multi-class classification are left as future work.
} tasks.

We describe below the complete, Shapley prior-based probabilistic model
by specifying the task-specific likelihoods that relate the latent function $f$ to observed data through $\varphi$, as illustrated in Figure~\ref{fig:gm}.

For regression tasks where $y\in\Real$,
we adopt a Gaussian likelihood conditioned on the latent variables $\varphi_d$:
\begin{equation}
    y =\sum_{d=1}^D \varphi_d + \epsilon, \quad \epsilon \sim \N{\phi_0, \sigma_0^2}.
\end{equation}
The marginal distribution of the observations
---after integrating out the latent random variables over the Shapley prior---
is tractable:
\begin{equation}
y \vert \vx \sim \N{\mu(\vx), \sigma(\vx)},\quad\text{where}\quad \mu(x) = \phi_0 + f(\vx)\quad\text{and}\quad\sigma^2(\vx) = \sigma_0 +  \sum_{d=1}^D \sigma_d^2(\vx),\label{eq:marg}
\end{equation}
by the \textbf{efficiency} property.
Notice how the marginal distribution of the observations links $\phi_0 + f(\vx)$
to the expected value of the data $y$.

For classification tasks,
we denote with binary indicator $\1_i$ whether observation $i$ is positive (1) or negative (0).
We interpret $y$ as a latent logit random variable,
with the observed label defined by a Bernoulli likelihood parameterized by the logit,
\ie
\begin{equation}
    \1 \sim \mathrm{Bernoulli}\left( \text{logits} = y\right).
\end{equation}
Unfortunately, the classification's marginal likelihood is analytically intractable
---we explain how we overcome this challenge in Section~\ref{ssec:variational_psi}.

\noindent
\begin{minipage}{0.54\textwidth}
\begin{tikzpicture}
    \node[draw, rectangle, minimum width=2cm, minimum height=2cm] (box) at (-1,0) {};

    \node[draw, rectangle, minimum width=4.9cm, minimum height=4.5cm] (box) at (0,1) {};
    
    \node at (4.5,3.2) {$i=\{1,\cdots,N\}$};
    \node at (1.5,2.8) {$d \in [D]$};
    \node at (-1,-0.75) {$S \subseteq [D] \setminus \{d\}$};
    \node[draw, rectangle, minimum width=1.18cm, minimum height=1.18cm, fill=gray!30] (xs) at (-1,0.1) {$\vx_{i,S}$};
    \node[draw, rectangle, minimum width=1.18cm, minimum height=1.18cm, fill=gray!30] (xd) at (-1,2) {$x_{i,d}$};
    \node[draw, circle, minimum size=1.18cm] (c) at (1.5,0.1) {$\varphi_{i,d}$};
    \node[draw, circle, minimum size=1.18cm,
        path picture={
          \path[fill=gray!30] (path picture bounding box.south west) 
                              -- (path picture bounding box.north west)
                              -- (path picture bounding box.north) 
                              -- (path picture bounding box.south) 
                              -- cycle;
        }] (d) at (3.5,0.1) {$y_i$};
    \node[draw, circle,  minimum size=1.18cm] (e) at (3.5,2) {$\epsilon_i$};

    \node[draw, circle, fill=gray!30, minimum size=1.18cm] (t) at (5.5,0.1) {$ \1_i$};
    
    \draw[->] (e) -- (d);
    \draw[->] (xd) -- (c);
    \draw[->] (xs) -- (c);
    \draw[->] (c) -- (d);
    \draw[->] (d) -- (t);

\end{tikzpicture}
\vspace{3ex}
\begin{minipage}{0.9\linewidth}
\vspace{3ex}
\captionof{figure}{Graphical model (left) and data-generating procedure (DGP, on the right) 
for PSI's generative story.
Shaded squares denote deterministic inputs, while shaded circles represent observed stochastic variables. Unshaded nodes correspond to latent variables.
In regression, $y$ denotes the observed response;
in classification, it represents latent logit values.
Note that the first step of the DGP (on the right) involves a sum over subsets that is computationally, $\mathcal{O}\left(2^D\right)$, expensive.}
\label{fig:gm}
\end{minipage}
\end{minipage}
\hfill
\begin{minipage}{0.46\textwidth}
\begin{tcolorbox}
\small
\vspace*{1ex}
For $i=\{1, \cdots, N\}$:
\begin{enumerate}[left=5pt]
    \item Calculate the expected value of per-feature $d \in [D]$ Shapley priors:
    \vspace*{-1ex}
    $$\hspace{-1ex} \phi_d(\vx_i) = \sum_{S \subseteq [D] \setminus \{d\}} \p{S} \left( f(\vx_{i,S\cup \{d\}}) - f(\vx_{i,S}) \right)$$
    \vspace*{-2ex}
    \item Draw per feature $d \in [D]$ stochastic Shapley values,
    from their conditional Shapley priors:
    \vspace*{-1ex}
    $$\varphi_{i,d} \vert \vx_i \sim \N{\phi_d(\vx_i), \sigma_d^2(\vx_i)}$$
    \vspace*{-2ex}
    \item  Draw global data uncertainty terms:
    \vspace*{-1ex}
    $$ \epsilon_i \sim \N{\phi_0 , \sigma_0^2}$$
    \vspace*{-2ex}
    \item Draw observed response (or latent logit) $y$:
    \vspace*{-1ex}
    $$y_i = \sum_{d=1}^D\varphi_{i,d} + \epsilon_i$$
    \vspace*{-2ex}
    \begin{enumerate}
        \item If classification, draw binary label:
        \vspace*{-1ex}
        $$ \1_i \sim \mathrm{Bernoulli}\left( \text{logits} = y_i \right)$$
        \vspace*{-2ex}
    \end{enumerate}
    \vspace*{-2ex}
\end{enumerate}
\end{tcolorbox}
\end{minipage} 

\subsection{PSI Learning}
\label{ssec:learning}

In PSI's generative model,
$f$ is an arbitrary functional connecting inputs to outputs.
In practice, 
given $N$ input-output pairs,
the goal is to identify an instance $f \in \gF$ that best fits the observed data,
where we denote the empirical dataset as
$\sD = {(\vx_i, y_i)}_{i=1}^N$ for regression and
use $\sD = {(\vx_i, \1_i)}_{i=1}^N$ for classification tasks.
To that end, one may either try to directly maximize the log-marginal likelihood of the dataset,
or in a Bayesian fashion, compute the posterior over all the unknown latents, given the dataset.

In any case, the marginal likelihood of observations is a critical function to compute.
Although the log-marginal over $\sD$ for regression tasks
\begin{equation}
\gL_{reg} = \log \prod_{i=1}^N \cp{y_i}{\vx_i} =  \log \prod_{i=1}^N
 \int_{\epsilon_i}  \left(   \int_{\varphi_{i,d}} \cp{y_i}{\Phi_i, \epsilon_i}   \p{\epsilon_i} \diff{ \epsilon_i} \prod_{d=1}^D \cp{\varphi_{i,d}}{\vx_i} \diff{\Phi_i} \right) \label{eq:marginal_reg},
\end{equation}
is analytically tractable;
the classification log-marginal likelihood
\begin{equation}
\gL_{class} = \log \int_{y_i} \expp{ \gL_{reg} + \log \prod_{i=1}^N \cp{\1_i} {y_i} }\diff{y_i}\label{eq:marginal_class},
\end{equation}
is analytically intractable.

More importantly, the computation of $\varphi_d \mid \vx$ involves
\textbf{evaluating an exponential number of feature subsets},
with a computational cost of $\mathcal{O}\left(2^D\right)$
---a well-known bottleneck in Shapley value estimation that hinders direct calculation of the marginal likelihoods in Equations~\ref{eq:marginal_reg}-\ref{eq:marginal_class}.

To overcome these issues,
we adopt a variational framework that enables joint model learning
and tractable inference of the stochastic Shapley values of interest.
Precisely, PSI aims to \emph{simultaneously learn} the unknown functionals $f$,
along with the global bias term $\phi_0$ and the uncertainty sources in the data:
the input conditional (heteroscedastic) uncertainty $\sigma(\vx)$ and the observation's homoscedastic uncertainty $\sigma_0$.
Furthermore, and leveraging the probabilistic nature of the model,
we are also interested in the inference of latent random variables $\varphi_d \mid \vx$,
\emph{to facilitate interpretable attributions of each input instance}.
We denote the full set of learnable parameters with $\theta$,
which includes parameters of functionals $f(\vx)$ and $\sigma(\vx)$
(e.g., neural network weights and biases),
the global bias term $\phi_0$, and the homoscedastic variance term $\sigma_0$.

\subsection{Variational PSI}
\label{ssec:variational_psi}
We hereby introduce a variational approximation to the otherwise intractable posterior over latent variables,
by computing the evidence lower-bound (ELBO) for regression and classification tasks:
\begin{align}
    \gL_{reg} & \geq \gL_{reg}^{ELBO} = \sum_{i=1}^N\Ex{\qregi}{\log\frac{\cp{y_i, \Phi_i}{\vx_i}
    }{\qregi}} \;,
    \label{eq:obj_reg} \\
    \gL_{class} & \geq \gL_{class}^{ELBO} =\sum_{i=1}^N\Ex{\qclassi\qregi}{\log\frac{\cp{\1_i, y_i, \Phi_i}{\vx_i}
    }{\qclassi\qregi}} \;,
    \label{eq:obj_class}
\end{align}
where we define variational families
\begin{align}
\qregi &= \cqregi{\Phi_i}{\vx_i} = \prod_{d=1}^D \cqreg{\varphi_{i,d}}{\vx_i}
    \quad \text {with} \quad \cqreg{\varphi_d}{\vx} = \mathcal{N}(\tilde{f}_d(\vx), \tilde{\sigma}_d^2(\vx)) \;, \quad \text{and}\label{eq:variational_family_reg}\\
\qclassi &= \cqclassi{y_i}{\vx_i, \1_i} = \cp{y_i}{y_i > 0, \vx_i}^{\1_i} \cp{y_i}{y_i \leq 0, \vx_i}^{1 - \1_i} \;. \label{eq:variational_family_class}
\end{align}
The variational Shapley distribution $\qregi$
---the critical component of the probabilistic Shapley inference (PSI) framework---
is shared between the regression and classification tasks.

For classification,
we incorporate additional probabilistic components that model
the logit (the logarithm of the odds) of the binary observation probability.
Precisely, we condition the logit's distribution on their sign,
as determined by the observed binary variable $\1$:
logits are greater than zero if $\1=1$,
and less than or equal to zero otherwise.
This extra element of the classification approximate posterior
is a truncated normal distribution over $y$, \ie
\begin{align}
    \cp{y}{y \geq 0, \vx}^{\1} \cp{y}{y \leq 0, \vx}^{1 - \1} =  \frac{\N{ \mu(\vx), \sigma^2(\vx)}}{\Psi\left(\1, \vx\right)}
    \quad \text{ with } \;
    \Psi\left(\1, \vx\right) =  \frac{1}{2}\left(1 + \textrm{erf}\left\{\frac{ \mu(\vx)}{\sigma(\vx)}\right\}  \right)^{\1}
    \left(1 + \textrm{erf}\left\{-\frac{ \mu(\vx)}{\sigma(\vx)}\right\} \right)^{1-\1} \;,
\end{align}
given logit sufficient statistics $\mu(\vx)$ and $\sigma(\vx)$ as in Equation \ref{eq:marg}, after marginalization of the Shapley prior.

Before delving into additional details and technicalities of PSI's variational inference procedure,
we rewrite the regression ELBO in terms of the variational Shapley distribution $\qregi$ as
\begin{align}
\gL_{reg} \geq \sum_{i=1}^N \underbrace{\Ex{\qregi}{\log \cp{y_i}{\Phi_i}}}_\text{($i$)} - \underbrace{\kl{\qregi}{ \cp{\Phi_i}{\vx_i} }}_\text{($ii$)} \;,
\label{eq:obj_reg_parts}
\end{align}
highlighting that
($i$) the first term fits the observed data,
while ($ii$) the second term finds a variational distribution $\qregi$ that approximates the Shapley prior $\cp{\Phi_i}{\vx_i}$.

Following standard variatonal inference,
one would optimize the ELBOs in Equations~\ref{eq:obj_reg}-\ref{eq:obj_class} with respect to variational parameters in Equations~\ref{eq:variational_family_reg}-\ref{eq:variational_family_class}.
In contrast,
we propose a revision to the lower-bounds of PSI,
based on a careful choice of the conditional variational families $\cqreg{\varphi_d}{\vx}$.

\subsubsection{A Tighter and Computationally Efficient ELBO for PSI}
\label{sssec:psi_elbo}

To devise a tighter, and computationally efficient, computation of PSI's variational lower-bound,
we pose the same functional form for the variational and prior functions, 
\ie $f_d(\vx)=\tilde{f}_d(\vx)$,
and uncertainty functionals $\sigma_d(\vx)=\tilde{\sigma}_d(\vx)$.
Recall that we are not equating the conditional Shapley prior
$\cp{\Phi_i}{\vx_i} = \N{\phi_d(\vx_i), \sigma_d^2(\vx_i)}$
to the variational family
$\cqreg{\varphi_d}{\vx} = \mathcal{N}(f_d(\vx), \sigma_d^2(\vx))$,
but imposing a shared functional parameterization between the generative model and the variational family.
This shared functional parameterization enables us to establish the following proposition.

\begin{proposition}
For $\cqreg{\varphi_d}{\vx} = \mathcal{N}(f_d(\vx), \sigma_d^2(\vx))$ and any function $h$,
the \textbf{efficiency} property of Shapley values guarantees that
$\Ex{\qregi}{ h\left( \phi_0 + \sum_{d=1}^D \varphi_{i,d} \right)}$ is equal to $\Ex{\cp{\Phi_i}{\vx_i}}{ h \left( \phi_0 + \sum_{d=1}^D \varphi_{i,d} \right)}$.\label{prop:expval}
\end{proposition}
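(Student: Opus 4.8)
The plan is to observe that both expectations are of the \emph{same} scalar functional $h$ evaluated at $Z = \phi_0 + \sum_{d=1}^D \varphi_{i,d}$, and that this scalar has an identical law under the prior $\cp{\Phi_i}{\vx_i}$ and under the variational family $\qregi$. The key structural fact is that both distributions are products of independent univariate Gaussians with the \emph{same} per-coordinate variances $\sigma_d^2(\vx_i)$ (by the shared functional parameterization $\sigma_d = \tilde{\sigma}_d$); they differ only in their per-coordinate means, which are $\phi_d(\vx_i)$ under the prior and $f_d(\vx_i)$ under the variational family. Since $h$ depends on $\Phi_i$ only through the scalar sum, each expectation reduces to an integral against the pushforward (marginal) law of $Z$, so only the distribution of the sum matters and the joint laws of the $\varphi_{i,d}$ are irrelevant.

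First I would compute the law of $\sum_{d=1}^D \varphi_{i,d}$ under each distribution, using that a sum of independent Gaussians is Gaussian with mean and variance equal to the respective coordinate-wise sums. Under the variational family this gives $\sum_{d=1}^D \varphi_{i,d} \sim \N{\sum_{d=1}^D f_d(\vx_i),\, \sum_{d=1}^D \sigma_d^2(\vx_i)}$, and under the prior $\sum_{d=1}^D \varphi_{i,d} \sim \N{\sum_{d=1}^D \phi_d(\vx_i),\, \sum_{d=1}^D \sigma_d^2(\vx_i)}$. The two variances coincide immediately by the shared parameterization.

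Next I would invoke the \textbf{efficiency} property (Theorem~\ref{th:phi_properties}, Property~\ref{prop:1}), which states $\sum_{d=1}^D \phi_d(\vx_i) = f(\vx_i)$, together with the assumed additive output structure $f(\vx_i) = \sum_{d=1}^D f_d(\vx_i)$. These two identities show that the means also agree, both equal to $f(\vx_i)$. Hence $\sum_{d=1}^D \varphi_{i,d}$ has the same law under the prior and the variational family, and a deterministic shift by $\phi_0$ preserves this, so $Z \sim \N{\phi_0 + f(\vx_i),\, \sum_{d=1}^D \sigma_d^2(\vx_i)}$ under both. Because equality in distribution is preserved under applying the fixed measurable map $h$ and taking expectation, the two expectations coincide, proving the claim.

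There is no genuine analytic obstacle here; the main point requiring care is the conceptual one that $h$ couples the coordinates of $\Phi_i$ \emph{only} through their sum, so that the two expectations depend on the respective distributions solely via the law of the sum. The efficiency property is precisely what forces the two means to match, despite the prior and the variational family assigning different per-coordinate means $\phi_d(\vx_i) \neq f_d(\vx_i)$ in general.
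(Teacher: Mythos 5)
Your proof is correct and follows essentially the same route as the paper's: both arguments reduce the claim to showing that the scalar $\phi_0 + \sum_{d=1}^D \varphi_{i,d}$ has the same Gaussian law under the prior and the variational family, with the shared variance parameterization matching the second moments and the efficiency property matching the means. Your writeup is slightly more explicit than the paper's about why equality of the law of the sum suffices (namely, that $h$ depends on $\Phi_i$ only through the sum), but the substance is identical.
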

\begin{proof}
The proof is followed by the efficiency property and is detailed in Appendix \ref{app:proof1}. 
\end{proof}
Equipped with this proposition, 
we can now equate expectations over the variational Shapley distribution $\qregi$ with those over the true prior,
to compute the first term of the ELBO in Equation~\ref{eq:obj_reg_parts} as
$\Ex{\qregi}{\cp{y_i}{\Phi_i}} = \Ex{\cp{\Phi_i}{\vx_i}}{\cp{y_i}{\Phi_i}} = \int_{\Phi_i} \cp{y_i}{\Phi_i} \cp{\Phi_i}{\vx_i} \diff \Phi_i = \cp{y_i}{\vx_i} $.

\paragraph{Tighter variational PSI ELBOs.}
We now formulate a modified ELBO for PSI
\begin{equation}
\gL_{reg}^{ELBO} \leq \gV_{reg} = \sum_{i=1}^N \log \cp{y_i}{\vx_i} - \beta \kl{\qregi}{ \cp{\Phi_i}{\vx_i} } \leq \gL_{reg} \;,
\label{eq:tight_elbo_regression}
\end{equation}
with hyperparameter $0 \leq \beta \leq 1$,
which regulates the tightness of the lower-bound ($\gV_{reg} = \gL_{reg}$, for $\beta = 0$)
by controlling the influence of the second term in the loss.
Additionally, we use this new lower-bound, $\gV_{reg}$, to rewrite the ELBO for the binary classification task (see detailed derivations in Appendix \ref{app:elbo}):
\begin{align}
    \gL_{class}^{ELBO} \leq \gV_{class}
    &=\sum_{i=1}^N \underbrace{
        \Ex{\qclassi}{\log\cp{\1_i}{y_i} + \log \cp{y_i}{\vx_i} }
    }_\text{(I)}
    + \underbrace{\gH(\qclassi)}_\text{(II)}
    -  \underbrace{
        \beta \kl{\qregi}{ \cp{\Phi_i}{\vx_i} }
        }_\text{(III)}
     \leq \gL_{class} \;, \label{eq:tight_elbo_classification}
\end{align}
where $\gH(\qclassi)$ is the entropy of the variational logit distribution.
This PSI classification objective promotes
(I) learning input-conditional predictive logits $y_i$, marginalized over SSVs,
that explain the observed binary labels $\1_i$;
(II) high-entropy variational distributions over logits, to avoid overconfident or degenerate solutions;
and (III) fitting a tractable variational $\qregi$ distribution over SSVs.

\paragraph{The subset marginal constraint for Shapley computations.}
Maximizing $\gV$ (for regression or classification) encourages learning parametric functionals $f_d$ that, due to PSI model design, directly satisfy~\ref{prop:1}
To meet the remaining properties described in Theorem~\ref{th:phi_properties},
it is mandatory to fulfill the subset-marginal constraint $f_d(\vx_s) = \int f_d(\vx) \diff{\cp{\vx}{\vx_s}}$,
where under slight abuse of notation,
we denote with $\cp{\vx}{\vx_s}$ the distribution of features not in the subset $S$.
Namely, we must ensure that the model's output for a subset $\vx_s$
aligns with the expected output under the corresponding marginalized input distribution.
In training, we enforce this by randomly masking the features in $\vx$,
in a similar fashion to \citet{jethani2021fastshap}, where the feature subsets were replaced with values outside the support of the data distribution.
In Appendix~\ref{app:remove}, we show that under reasonable assumptions,
random feature removal during the variational training procedure with $\gV_{reg}$ and $\gV_{class}$ guarantees that this constraint is satisfied.

\paragraph{Efficient computation of $\gV_{reg}$ and $\gV_{class}$.}
We describe below how the tighter ELBOs of Equations~\ref{eq:tight_elbo_regression} \&\ref{eq:tight_elbo_classification} can be computed through closed-form expressions that enable their scalable optimization.
\begin{itemize}[leftmargin=2ex]

\item \textbf{The Shapley Kullback-Leibler divergence.} The key term ($ii$) in Equation~\ref{eq:obj_reg_parts} obeys
\begin{equation}
\dshap = \kl{\cqreg{\Phi}{\vx}}{\cp{\Phi}{\vx}} = 
  \sum_{d=1}^D \frac{\left(\left[\sum_{S \subseteq [D] \setminus \{d\}}\p{S}\left(f(\vx_{S\cup d}) - f(\vx_s)\right)\right] - f_d(\vx) \right)^2}{2\sigma_d(\vx)^2}.
\end{equation}
Instead of its exact computation (of exponential complexity due to the sum over coalitions), 
we derive a computationally efficient, \emph{stochastic estimate} as follows:
\begin{align}
&\hatdshap = D\frac{\left| \left(\frac{\sum_{k=1}^{K_1}f(\vx_{s_{1,k}\cup d}) - f(\vx_{s_{1,k}})}{K_1} - f_d(\vx) \right)\left(\frac{\sum_{k^\prime=1}^{K_2}f(\vx_{s_{2,k^\prime}\cup d}) - f(\vx_{s_{2,k^\prime}})}{K_2} - f_d(\vx) \right)\right|}{2\sigma_d(\vx)^2},\label{eq:dshap}
\end{align}
with $s_{1,k} \;,\; s_{2,k^\prime} \sim \p{S}$ and $d \sim \textrm{U}({1, D})$.
The theoretical properties of this estimator $\hatdshap$ are detailed in Appendix~\ref{app:props}.
In practice, we use a single sample for each subset
---$s_{1,k}$ and $s_{2,k}$ (i.e., $K_1 = K_2 = 1$)
---along with a single sample over the feature dimension $d$ to ensure computational efficiency.
\begin{remark} 
\label{rem:1}
$\dshap \rightarrow 0$ implies $f_d \approx \phi_d$.
A property of $\phi_d$ is that $\Ex{\vx \sim \sD}{\phi_d(\vx)} \approx \Ex{\vx \sim \sD}{f_d(\vx)} \approx 0$.
This serves as a useful diagnostic tool during training:
the average output of the variationally trained functional $f_d$ should converge toward zero per training iteration.
We illustrate this behavior in Figure~\ref{fig:goingtozero} of Appendix~\ref{app:remark33}.
\end{remark}

\item \textbf{Observation likelihood terms.} We leverage standard gradient reparameterizations for the expectation of the regression likelihood,
and the inverse transform sampling \citep{bou2023torchrl} for the unbiased estimation of the expectation of Bernoulli log-likelihoods $\Ex{\qclass}{\log\cp{\1}{y}}$ ---term (I) in Equation~\ref{eq:tight_elbo_classification}.

\item \textbf{Classification entropy and cross-entropy terms.} The classification entropy term $\gH(\qclass) = \Ex{\cp{y}{y > 0, \vx}^{\1} \cp{y}{y \leq 0, \vx}^{1 - \1}} {-\log \cp{y}{y > 0, \vx}^{\1} \cp{y}{y \leq 0, \vx}^{1 - \1}}$
corresponds to the entropy of a truncated normal distribution, 
given logit sufficient statistics $\mu(\vx)$ and $\sigma(\vx)$ as in Equation \ref{eq:marg}:
\begin{equation}
\gH(\qclass) = \frac{1}{2}\left(\log \left\{2\pi e \sigma^2(\vx)\Psi(\1, \vx)\right\} +
\Omega(\1, \vx)\right), \quad \text{where} \quad \Omega(\1, \vx) = (-1)^{\1} \cdot \frac{\frac{\mu(\vx)}{\sigma(\vx)} \frac{1}{\sqrt{2 \pi}}\expp{-\frac{\mu^2(\vx)}{\sigma^2(\vx)}}}{{\Psi(\1, \vx)}} \; .
\end{equation}
Additionally, we compute the cross-entropy term in Equation~\ref{eq:tight_elbo_classification} as
\begin{equation}
\Ex{\qclass}{\log \cp{y}{\vx}} = \frac{1}{2} \log 2\pi \sigma^2(\vx)
+ \frac{1}{2\sigma^2(\vx)}
\left(
v^2(\1, \vx) 
+ \left( m(\1, \vx)  - \mu(\vx) \right)^2
\right),
\end{equation}
where $m$ and $v^2$ are the mean and variance statistics of the truncated normal distribution given by:
\begin{align}
m(\1, \vx) = \mu(\vx) - \frac{\sigma^2(\vx)}{\mu(\vx)} \Omega(\1, \vx) \quad \text{and} \quad
v^2(\1, \vx) = \sigma^2(\vx) \left(1 - \Omega(\1, \vx)-\left({\frac {\sigma(\vx )}{\mu(\vx)}}\Omega(\1, \vx)\right)^{2}\right) \; .
\end{align}
\end{itemize}

\paragraph{Efficient and scalable PSI learning procedure.}
We summarize PSI's learning procedure,
according to the ELBOs of Equations~\ref{eq:tight_elbo_regression} \&\ref{eq:tight_elbo_classification},
in Algorithm \ref{alg:learening} (Appendix \ref{app.alg}).

\subsection{Neural Network-based Shapley Prior Sufficient Statistics: The $f_d$ and $\sigma_d$ networks}\label{sec:arch}
\label{ssec:sufficient}

In this section, we describe how we leverage
\emph{neural networks to model the sufficient statistics of the Shapley prior}
in Equations~\ref{eq:req1} \&\ref{eq:shapley_prior}, \ie $f_d$ and $\sigma_d$.
In Section~\ref{sssec:MENN}, we also describe the proposed masked embedding neural network (MENN) architecture that enables efficient \emph{variable length input} (VLI) modeling.

We start by clarifying why our framework is based on embedded baselines for marginal estimation.
Recent methods estimate marginal contributions by modifying the input space with baseline values. For example, \citet{yoon2018invase} simulate feature removal by replacing input features with a fixed zero vector. However, using constant baselines in the input space can lead to misleading attributions ---especially in high-dimensional or structured data. A more principled alternative, proposed by \citet{jethani2021fastshap}, is to explicitly append a feature removal mask $\mR$, allowing a neural network to distinguish between original and substituted inputs. We follow a similar technique, by operating in a (learned) embedded space.
Precisely, rather than operating directly in the input space, we map each feature $x_d$ to a high-dimensional embedding $\vz_d$, and substitute missing features with their baseline embeddings $\vb_d$. 

We begin by mapping the inputs $\vx \in \sX_{[D]}$ to $\rmZ\in \sR^{D \times D_z}$, where each row of $\rmZ$ represents $\vz_d = MENN_1(x_d) \in \sR^{D_z}$.
Because sequentially iterating over all $d \in [D]$ is computationally inefficient, we employ a masked network architecture that performs these operations in parallel, as described in Section~\ref{sssec:MENN}.
When marginalizing over $x_d$, \ie using variable length input that does not include feature $x_d$ described by \emph{feature removal matrix} $\mR$,
we replace the embedding $\vz_d$ of $x_d$ with the learnable baseline vector $\rvb_d$ to simulate feature removal,
and construct $\vz_d^\prime$.
We then take a linear combination of these embeddings using an affine transformation matrix $\rmW$ and reduce the dimensionality from $D \times D_z$ to $D_z$.
We input the result into a feedforward neural network (FFNN) to obtain $f_d(\vx)$. 

For $\sigma_d(\vx)$, we want to associate each per input feature dimension $x_d$ and its corresponding output $f_d(x_d)$ with their uncertainty.
In order to leverage the learned (variable input) embeddings and associate them with their corresponding functional output $f_d$, we concatenate $\vz_d^\prime$ with $\vo_d = MENN_2(f_d)$,
and use a feedforward neural network (FFNN) to obtain $\sigma_d(\vx)$.
To avoid this uncertainty module altering the learned feature embeddings and their outputs,
we freeze $\vz_d^\prime$ and $f_d$ when inputting them to the FNNN that outputs $\sigma_d$,
following a similar approach as in \citep{stirn2023faithful} and more recently \citep{bramlage2025principled}.
We showcase the overall neural network architecture to compute Shapley prior sufficient statistics in Figure \ref{fig:arch}.

\begin{figure*}[h]
\centering
\includegraphics[width=0.88\linewidth]{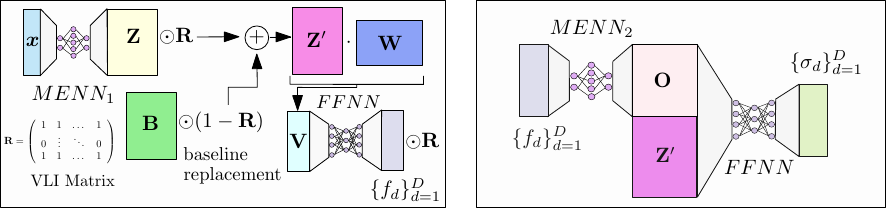}
\caption{The $f_d$ and $\sigma_d$ networks. We independently generate latent embedding vectors for each feature, and replace absent features with a baseline vector in the embedding space. We compute a linear combination of these embeddings to arrive at a single vector, which we use as model output $f_d$.
To compute the variance $\sigma_d$,
we re-utilize the information in the model predictions along with the feature representations.
Note that, $f(\emptyset) = 0$ can easily be satisfied by multiplying the final output by $\mR$.}\label{fig:net}
\label{fig:arch}
\end{figure*}

\subsubsection{The Masked Embedding Neural Network (MENN) Architecture}
\label{sssec:MENN}
We present here the masked-neural architecture used for efficient computation of (subset marginal) sufficient statistics.
While masked neural architectures have been widely used across domains such as natural language processing and density estimation \citep{vaswani2017attention, devlin2018bert, papamakarios2017masked},
our goal here is distinct:
\ie the efficient computation of high-dimensional per-feature embeddings in parallel
---avoiding the need to iterate over individual feature dimensions.
The masked embedding neural network (MENN) architecture we describe here enables efficient information flow across neurons while generating continuous per-feature embeddings in parallel ---see an illustrative example in Appendix~\ref{app:egmask}.

A $K$-layer neural network can be represented by $\vz_k = act_k(\vz_{k-1} \rmW_k)$, where $\vz_0 = \vx^\top$ and $\vz_L = \hat{y}$. We define each $\rmW_k = {\rmW^\prime}_k \odot \rmM_k$, where $\rmM_k$ are binary $D_1^k \times D_2^k$ masking matrices with $D_2^k \geq D$. 
The permeability of the network is determined by the non-zero elements of $\rmM_{1:K} = \prod_{k=1}^K \rmM_k$. In particular, if $\rmM_{1:K}(d,l) > 0$, then there is an information flow from $d^\text{th}$ feature to $l^\text{th}$ output neuron \citep{germain2015made}.

We begin by defining the MENN initial masking matrix $\rmM_1$ as
\begin{equation}
\rmM_{1}(d,l) = 
\begin{cases}
1 \quad & \text{if } 1 + (d-1)e_1 \leq l \leq d e_1\1_{d\neq D} + D_2^1 \1_{d = D}\nonumber \\
0 \quad & \text{otherwise,}
\end{cases}
\end{equation}
and $\rmM_k$, for $k>1$,
as $col_l(\rmM_k) = sgn(row_d (\rmM_{1:k-1})^\top)$,
for $1 + (d-1)e_k \leq j \leq de_k \1_{d \neq D} + D_2^k \1_{d = D}$,
where $sgn(\cdot)$ is the sign function, $e_k = nint(D_2^k/D)$ for $d \in [D]$, $k \in [K]$, and $D_2^K$ is an integer multiple of $D$. Here, $nint(.)$ refers to the nearest integer function.
The multiplication of a sequence of such matrices has the following recursion:
\begin{equation}
    col_{l}(\rmM_{1:k}) = \gamma_d col_{1 + (d-1)e_{k-1}}(\rmM_{1:k-1}),
\end{equation}
where $1+(d-1)e_k\leq l \leq de_k \1_{d \neq D} + D_2^k \1_{d = D}$,
and $\gamma_d\in \mathbb{N}^{+}$ for  $d \in [D]$.
Since $D_2^K$ is an integer multiple of $D$, the columns of $\rmM_{1:K}$ repeat $col_{1 + (d-1)e_{K-1}}(\rmM_{1:K-1})$ $e_K$ number of times.
Then, $D \times D e_K$ matrix $M_{1:K}$ obeys:
\begin{equation}
\rmM_{1:K} (d, l) = 
        \begin{cases}
        \gamma_d, \; \text{for } \gamma_d > 0 , \text{ if } 1 + (d-1)e_K \leq l \leq d e_K \\
       0, \hspace*{13ex} \text{ otherwise.}\nonumber
    \end{cases}
\end{equation}
For instance, for $D = 3, D_2^K = 6$ (\ie $D_z = e_K=2$),
$\rmM_{1:K}$ is (see a step-by-step example in Appendix \ref{app:egmask}):
\begin{equation}\label{eq:appendrow} \prod_{k=1}^K M_k =
  \left[\begin{array}{cccccc}
    \rowcolor{red!20}
    \sReencell \gamma_1  & \sReencell\gamma_1  & 0 & 0 & 0 & 0 \\
    \rowcolor{red!20}0  & 0  & \sReencell\gamma_2  & \sReencell\gamma_2 & 0 & 0 \\
   \rowcolor{red!20} 0  & 0  & 0 & 0 & \sReencell\gamma_3  & \sReencell\gamma_3 \\
    \end{array}\right] .\nonumber
\end{equation}
We emphasize that $\rmM_{1:K}$ in MENN is a masking operation applied to neural network weights,
to efficiently map each input feature to a different embedding.
On the contrary, $\mR$ as depicted in Figure~\ref{fig:arch} is a feature removal matrix for baseline embedding replacement, \ie it discards the MENN generated embeddings if a feature is absent and modulates their replacement with a baseline vector.

\section{Results}
\label{sec:results}
\subsection{PSI Faithfully Captures the Data Generating Process' Feature Attribution Distribution}
\label{sec:case1}

To illustrate PSI's capabilities in capturing true feature attributions,
we generate synthetic datasets via the following process:
(I) draw independent features $x_1, x_2, x_3 \sim \textrm{U}(-4,4)$;
(II) sample feature-conditional SSVs
$\varphi_1 \sim \N{\exp\{-x_1^2\} - \frac{\sqrt{\pi}}{8}\textrm{erf}\{4\},0.6\cos{(0.03x_1)^{800}}}$,
$\varphi_2 \sim \N{\sin{\{-x_2^2\}} + \frac{\sqrt{2\pi}}{8}S(4 \sqrt{\frac{2}{\pi}}), 0.2|x_2|}$,
and $\varphi_3 = 3\cos(3x_3) + 4\sin(5x_3)$;
finally,
(III) observe data as the linear combination of drawn SSVs $y = \varphi_1 + \varphi_2 + \varphi_3$.

The summary statistics of the \emph{ground truth}, 
data generating process (DGP) attribution of features are depicted in the top row of Figure \ref{fig:syn}.
We observe input-output pairs, \ie $\sD = \{\vx_i, y_i\}_{i=1}^D$, generated from the DGP above,
and use them for training PSI. 
Recall that $\varphi$ are latent variables, \emph{not available in training}.

\begin{figure*}[h]
\centering
\includegraphics[width=0.85\linewidth]{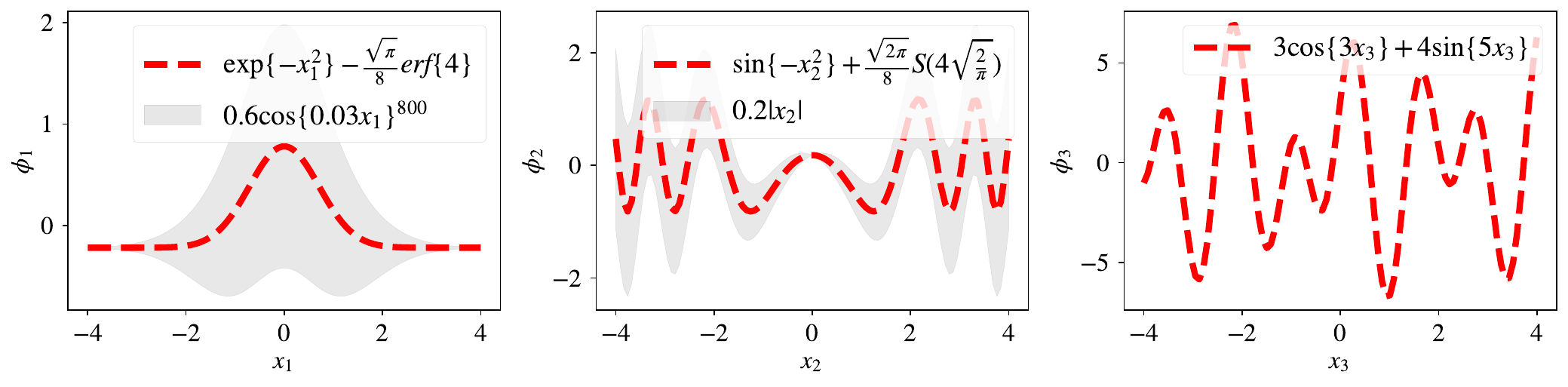}
\centering
\includegraphics[width=\linewidth]{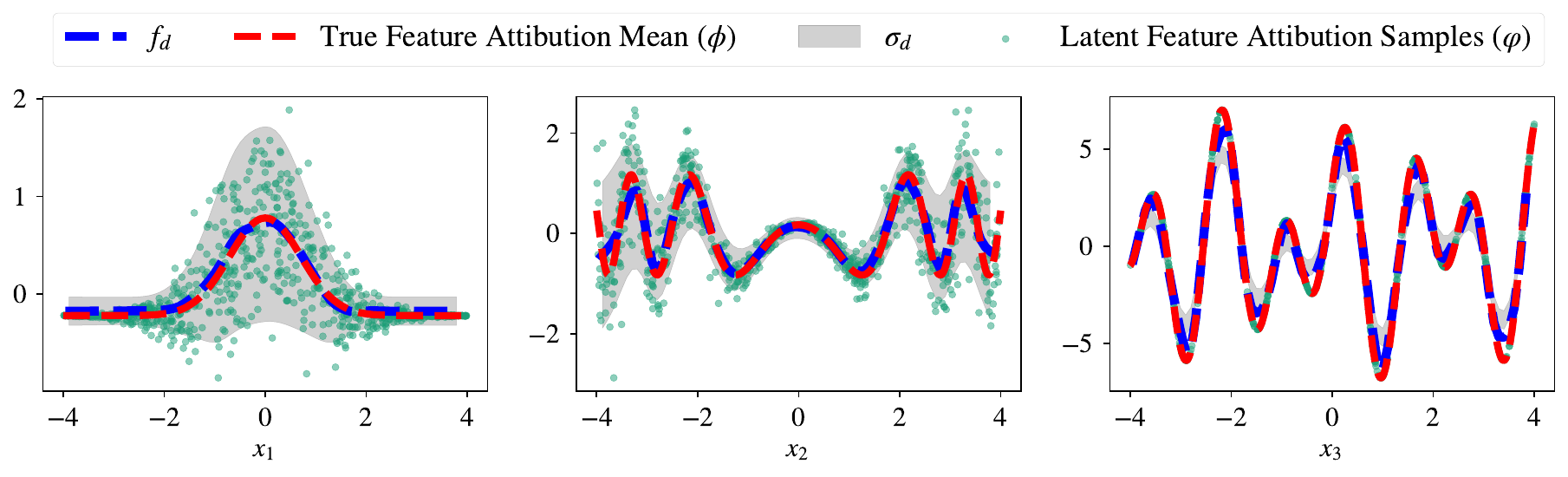}
\caption{
Ground truth feature attributions (top) and the attributions learned by PSI (bottom).
Gray areas represent \textbf{true} (top) and \textbf{learned} (bottom) heteroscedastic uncertainty functions, both of which are two standard deviations away from the mean statistics.
The learned PSI mean $f_d$ closely aligns with the true data-generating feature attributions, while PSI' learned $\sigma_d$ accurately captures the underlying attribution uncertainty.
Notably, latent SSV samples drawn from the true data generating functions
(\emph{not observed during training} and shown in green)
fall within two standard deviations of PSI's credible intervals,
indicating well-calibrated feature attribution uncertainty.
}
\label{fig:syn}
\end{figure*}

We showcase in the bottom row of Figure \ref{fig:syn}
PSI's learned variational distributions over latent $\varphi$ values, \ie $\cqreg{\varphi}{\vx}$.
Each learned $f_d(\vx)$ closely aligns with the mean of the true data-generating feature attributions,
while the learned uncertainty $\sigma_d(\vx)$ accurately captures the underlying input-conditional attribution noise.
In particular, we highlight that PSI successfully:
\begin{enumerate}
    \item \textbf{Captures the true underlying heteroscedastic uncertainty} ---approximating noise functions $0.6 \cos{(0.03x_1)^{800}}$ for $f_1$, $0.2|x_2|$ for $f_2$, and near-zero variance for $f_3$--- in alignment with the true data patterns.
    
    \item \textbf{Covers the latent, random SSV samples $\varphi$} within credible intervals of two standard deviations, indicating well-calibrated uncertainty estimates.
    These latent SSV samples \emph{are not observed}, and are overlaid over the learned summary statistics of Figure~\ref{fig:syn} solely for illustrative purposes.
\end{enumerate}

\subsection{On the Benefits of Feature Attribution Decisions based on Credible Intervals}
\label{sec:case2}

We hereby illustrate the significance of considering distributional Shapley attributions for model explainability, 
\ie the added benefits of modeling and learning Shapley distributions for nuanced feature attribution understanding.
With access to a variational Shapley distribution $\qregi$,
uncertainty quantification can be used to probabilistically identify informative features,
by considering credible attribution intervals.

We define a probabilistic Shapley attribution as $att_d(z,\vx) = f_d(\vx) + z\sigma_d(\vx)$,
where we can elucidate attributions based not only on the mean Shapley value,
but considering full attribution probabilities:
\ie we can move away from the expected attribution and consider the attribution probability covered by $z$ standard deviations.

There are two hypotheses that motivate the full probabilistic characterization of feature attributions:
($i$) the conditional mean statistic can often relate (and be limited) to \emph{spurious correlations} that explain average attributions of observed data,
and ($ii$) irrelevant regions of input feature space
---that do not contribute to the observed model outputs---
act as \emph{dummy attributions}, and hence tend to exhibit \emph{low attribution uncertainty}.
Hence, the rationale for making decisions with higher posterior probabilities is that dummy features will have both \emph{negligible mean and negligible variance}.
By increasing the probabilistic attribution threshold,
we move beyond average attributions,
shifting the focus of explanations to high-confidence attribution regions
that better explain the model's decisions.

In what follows,
we first assess how feature attribution uncertainty enhances the explainability of image recognition tasks.
Then, we showcase how probabilistic ranking of feature attributions reveals subtleties on which inputs most critically influence a model’s prediction, aiding subsequent decision-making.

\paragraph{Localization of relevant image regions on MNIST.}
To apply PSI, with the proposed MENN architecture, to image data, we convert MNIST digit images into a tabular format by dividing each image into $D' \times D'$ non-overlapping patches, where $D' < D$, and $D$ is the original image height/width.
Each patch is \emph{individually} mapped to a scalar value,
producing a latent representation that serves as input to a decoder reconstructing the original image.
We use this representation to learn the PSI model,
\ie $\vx$ is the encoded image representation,
$y$ is the per-image latent logit random variable,
and $\1$ is the observed label.
We train separate models on four of the MNIST digits ($0$, $2$, $4$, and $8$),
using one versus all binary classifiers with Bernoulli likelihoods.\footnote{Extension of PSI to the multinomial likelihood is left as future work.}
\begin{figure}[h]
\centering
\begin{subfigure}[t]{0.24\textwidth}
  \centering
  \includegraphics[width=0.8\linewidth]{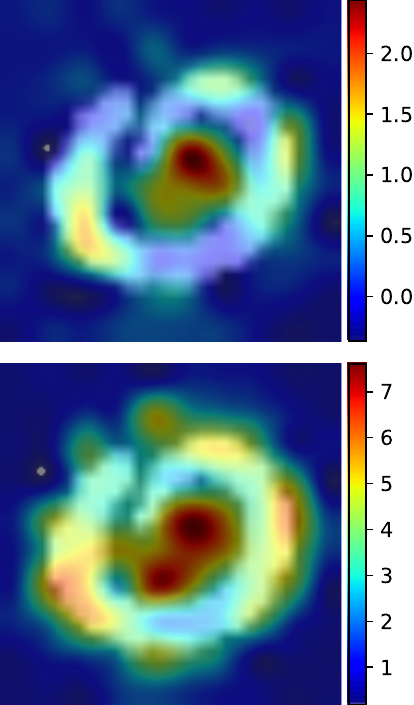}
\end{subfigure}
\hfill
\begin{subfigure}[t]{0.24\textwidth}
  \centering
  \includegraphics[width=0.83\linewidth]{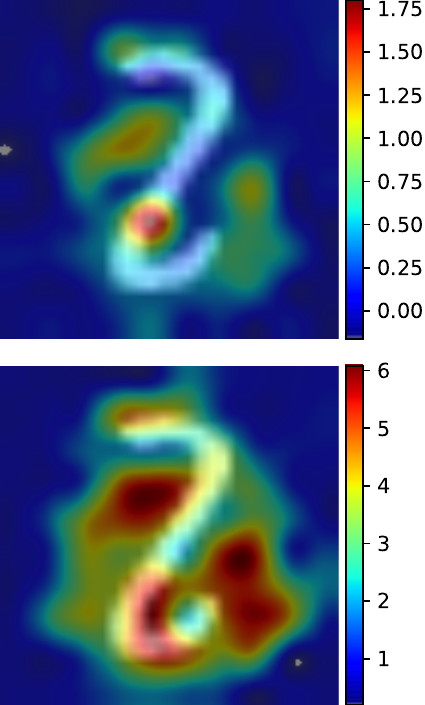}\end{subfigure}
\hfill
\begin{subfigure}[t]{0.24\textwidth}
  \centering
  \includegraphics[width=0.8\linewidth]{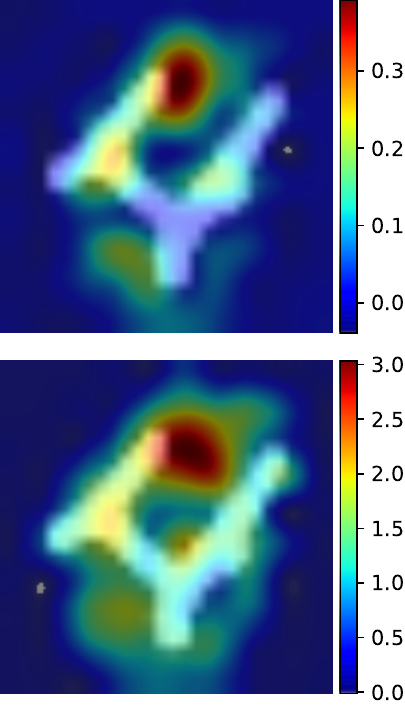}\end{subfigure}
\hfill
\begin{subfigure}[t]{0.24\textwidth}
  \centering
  \includegraphics[width=0.83\linewidth]{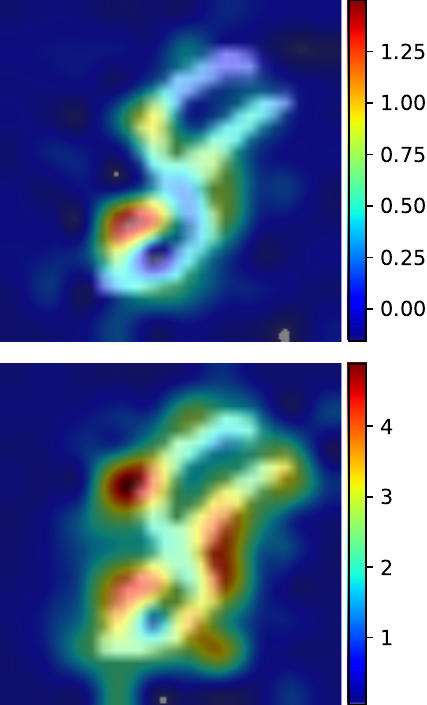}
  \end{subfigure}
\caption{
MNIST digit localization according to $att_d(z,\vx)$ with respect to varying $z$-values.
(\textbf{top}) The 50\% credible attribution ($z=0$), that corresponds to standard Shapley values, often results in coarse or low-resolution attributions.
In contrast, (\textbf{bottom}) using 98\% confidence intervals ($z=2$) yields more informative feature attributions localized around the full digit of interest and its key traces.}
\label{fig:mnist_example}
\vspace{-2ex}
\end{figure}

In Figure~\ref{fig:mnist_example}, we illustrate how decisions based on probabilistic Shapley attributions $att_d(z,\vx)$, for varying $z$,
enhances explainability.
In the top row, we show how relying solely on mean attributions, 
\ie standard Shapley values, can lead to low information granularity.
For instance,
the mean attribution for digit 0 emphasizes the hollow center significantly,
rather than the digit's outline, reflecting a spurious pattern.
Similarly, for digit 4, the mean attribution highlights the empty space between the vertical and diagonal strokes.
In contrast, when using $att_d(z=2,\vx)$,
the attribution map more effectively highlights the digit regions
where the actual number is displayed,
demonstrating improved localization of relevant features.
We emphasize that dummy features, \ie regions that do not include digit strokes,
have zero attribution for both $z=0$ and $z=2$
(illustrated in dark-blue color in Figure~\ref{fig:mnist_example}).

We quantify this qualitative improvement in explainability with results presented in Figure \ref{fig:mnist_quant},
for varying decision thresholds, from $z=0$ to $z=8$.
We first standardize the attribution scores,
$\widetilde{att_d}(z, \vx) = \frac{att_d(z,\vx) - \inf_{d\in[D]} att_d(z,\vx)}{ \sup_{d\in[D]} att_d(z,\vx) - \inf_{d\in[D]} att_d(z,\vx) }$,
to ensure that probabilistic Shapley attributions are between 0 and 1,
hence enabling consistent comparison across instances.
We measure the precision ($PR=\frac{TP}{TP + FP}$), recall ($RCL= \frac{TP}{TP + FN}$), and F-metrics ($F=2 \frac{RCL \times PR}{RCL + PR}$) according to the following definitions:
True positives, $TP = \mathbb{E}_{\vx \sim \sD }\left\{\mathbb{E}_{d \sim \textrm{U}(1, D')}\left\{x_d * \widetilde{att_d}(z, \vx) \vert \vx , x_d \neq 0\right\}\right\}$, 
where high TP values correspond to when the attribution score is high in regions where the digit is present (\ie $x_d=1$).
False positives, $FP = \mathbb{E}_{\vx \sim \sD }\left\{\mathbb{E}_{d \sim \textrm{U}(1, D')}\left\{(1 - x_d) * \widetilde{att_d}(z, \vx) \vert \vx, x_d = 0\right\}\right\}$,
with high values arising when the attribution score is high in regions without a digit (\ie $x_d = 0$).
False negatives, 
$FN = \mathbb{E}_{\vx \sim \sD }\left\{\mathbb{E}_{d \sim \textrm{U}(1, D')}\left\{x_d * (1 - \widetilde{att_d}(z, \vx)) \vert \vx , x_d \neq 0\right\}\right\}$,
occurring when the attribution score is low in regions where a digit actually exists.
Results in Figure \ref{fig:mnist_quant}
demonstrate that incorporating more attribution mass leads to better localization and explainability,
with saturation beyond the $z=2$ (98\% confidence) case.

\begin{figure*}[h]
\centering
\includegraphics[width=\linewidth]{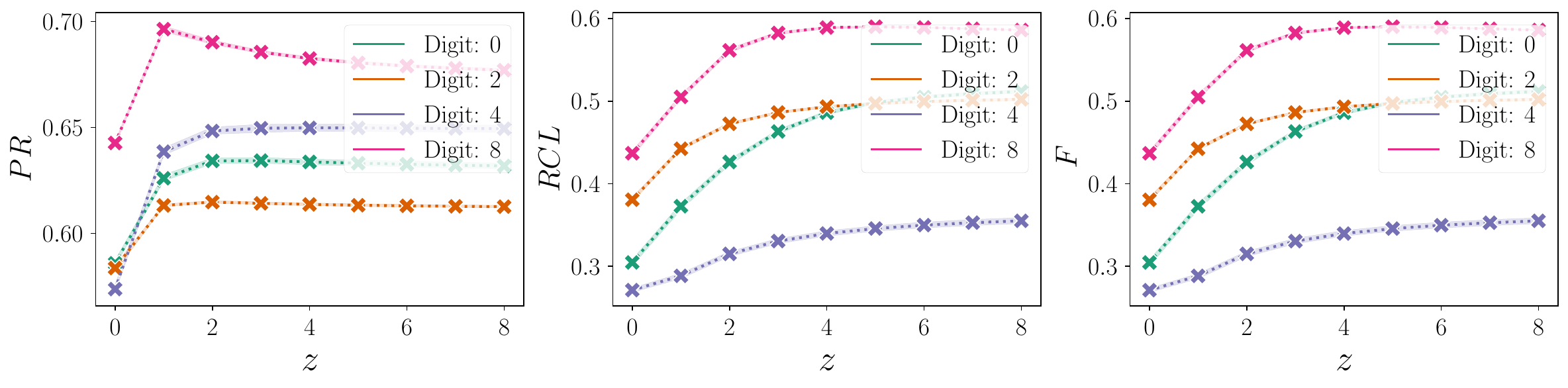}
\caption{Precision, recall, and F-metrics for varying $z$ value.
We observe that for digits $0$, $2$, and $4$ precision increases with $z$ and plateaus around/after $z=2$.
For digit $8$, PSI loses precision for $z>2$,
indicating that the model assigns attributions to regions that do not include the digit.
We conclude that incorporation of regions outside of mean attributions ($f_d = \phi_d$, with $z=0$) leads to better localization and interpretability.}
\label{fig:mnist_quant}
\end{figure*}

\paragraph{Probabilistic Ranking of Feature Attributions.}
Knowing the ranking of feature attributions of a model reveals which inputs most strongly influence the model’s predictions, aiding interpretation and decision-making.
We showcase here the use of probabilistic Shapley attribution rankings on a task of clinical relevance.
Specifically,
we train PSI on the ICU mortality dataset of \citep{icu2020data},
which contains anonymized electronic health records from ICU patients
(including demographics, vital signs, laboratory test results, and interventions over the first 24 hours of admission) for the task of predicting mortality.

In Figure~\ref{fig:local}, we illustrate how mean attribution-based rankings ($z=0$, on the left)
can differ significantly from worst-case scenarios ($z=2$, on the right).

\begin{wrapfigure}{r}{0.605\textwidth}
  \centering
\includegraphics[width=0.45\textwidth]{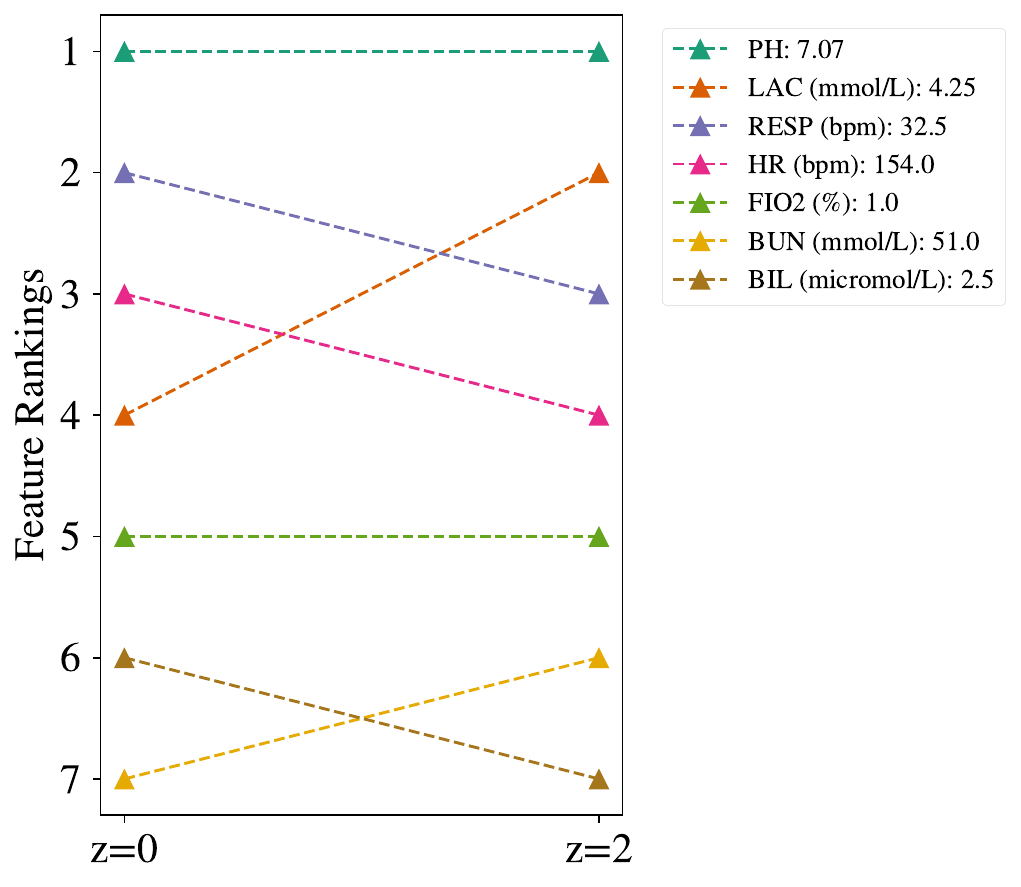}
  \caption{Probabilistic ranking of measurements for predicting ICU mortality of a patient,
  with a higher ranking indicating higher feature attribution to mortality.
  Ranking input feature importance varies with respect to the considered credible interval,
  highlighting different insights on their significance.}
  \vspace{-2ex}
  \label{fig:local}
\end{wrapfigure}
In particular,
from the mean feature attribution ranks,
one would conclude that respiratory and heart rates (RESP and HR) are, on average, more influential than lactate (LAC).
However, the uncertainty around LAC’s attribution is significantly higher, 
suggesting that in certain high-risk cases, LAC may be one of the most critical features.
Indeed, as explained by a clinical expert,
a lactate level of 4.25 mmol/L reflects severe tissue hypoperfusion and is strongly associated with life-threatening conditions like sepsis or shock,
whereas a respiratory rate of 32.5 bpm, while elevated, typically indicates compensatory stress and is less directly tied to mortality.
In other words, RESP and HR are typically indicative of risk 
rather than a direct marker of a life-threatening pathology.
In contrast,
LAC, which is at the top of the rank when considering feature attribution probabilities,
is a direct indicator of critical outcomes.

\subsection{PSI: efficient and accurate data-marginal computations via MENN}
\label{sec:psi_menn}

In this ablation study,
we demonstrate the benefits of the MENN architecture proposed in Section~\ref{sssec:MENN}
for efficiently and accurately computing data-marginals of variable input-length.

We provide comparative predictive performance results
for synthetic regression tasks in Table \ref{tab:mennvsffnn}
(see Appendix \ref{app:synth} for their corresponding DGPs),
by comparing PSI's performance when using MENN
or a standard feedforward neural network (FFNN).
Precisely, we replicate the FFNN-based procedure by \citet{jethani2021fastshap}, 
where variable-length input modeling is achieved by replacing removed features with a baseline value outside the data support,
based on inputting the FFNN with a concatenation of data and the feature removal values $\mR$.
We train both alternatives using the PSI objective,
and compare their predictive (RMSE) performance.

We observe in Table \ref{tab:mennvsffnn} that modeling variable-length inputs using the MENN architecture leads to better predictive performance,
across different scaling factors of $\hatdshap$,
grouped together as $\beta^\prime = D\beta / 2$.

\begin{table}[ht]
  \centering
  \begin{minipage}{0.55\linewidth}
    \centering
\begin{tabular}{|c|c|ccccc|}
\hline
Net                                              & $\beta^\prime$ & synth1         & synth2         & synth3         & synth4         & synth5         \\ \hline
\rowcolor[HTML]{EFEFEF} 
\cellcolor[HTML]{EFEFEF} & 0.001     & 0.605& 	\textbf{0.004}& 	0.125& 	0.669& 	0.419\\
\rowcolor[HTML]{EFEFEF} 
\cellcolor[HTML]{EFEFEF} & 0.01    & 0.611&	\textbf{0.004}&	\textbf{0.120}&	0.669&	\textbf{0.416}\\
\rowcolor[HTML]{EFEFEF} 
\cellcolor[HTML]{EFEFEF} & 0.1     & 0.595&	\textbf{0.004}&	0.221&	0.673&	0.417\\
\rowcolor[HTML]{EFEFEF} 
\multirow{-4}{*}{\cellcolor[HTML]{EFEFEF}MENN} & 1   & \textbf{0.582}&	\textbf{0.004}&	1.118&	\textbf{0.579}&	0.446\\ 
\hline
& 0.001   & 0.642&	0.071&	0.368&	0.689&	0.433  \\
& 0.01    & 0.640&	0.128&	0.234&	0.677&	0.435  \\
& 0.1     & 0.624&	0.103&	0.174&	0.674&	0.431  \\
\multirow{-4}{*}{FFNN}                        & 1       & 0.599 &	0.186 &	0.897 &	0.660 &	0.477\\ \hline
\end{tabular}%
  \end{minipage}%
  \hfill
  \begin{minipage}{0.35\linewidth}
    \small 
    \captionof{table}{RMSE results of PSI when using different model architectures, for various $\beta^\prime$ values. The best results are shown in \textbf{bold}.
    The masked network outperforms the widely used baseline on all datasets (see Appendix \ref{app:synth} for DGPs).}
    \label{tab:mennvsffnn}
  \end{minipage}
\end{table}

Additionally,
we demonstrate how data samples drawn from the MENN-based PSI procedure are closer to the true data distribution,
\ie it provides a more accurate description of the observed data.
To that end,
we measure the J-Divergence between empirical samples drawn from PSI
and the original, synthetic data generating process
---we refer to Appendix ~\ref{app:jsd} for details on how we compute these divergences.

In Figure \ref{fig:amrginals},
we clearly observe that the MENN architecture leads to better data-marginal density modeling.
\begin{figure*}[h]
\vspace{-1ex}
\centering
\includegraphics[width=\linewidth]{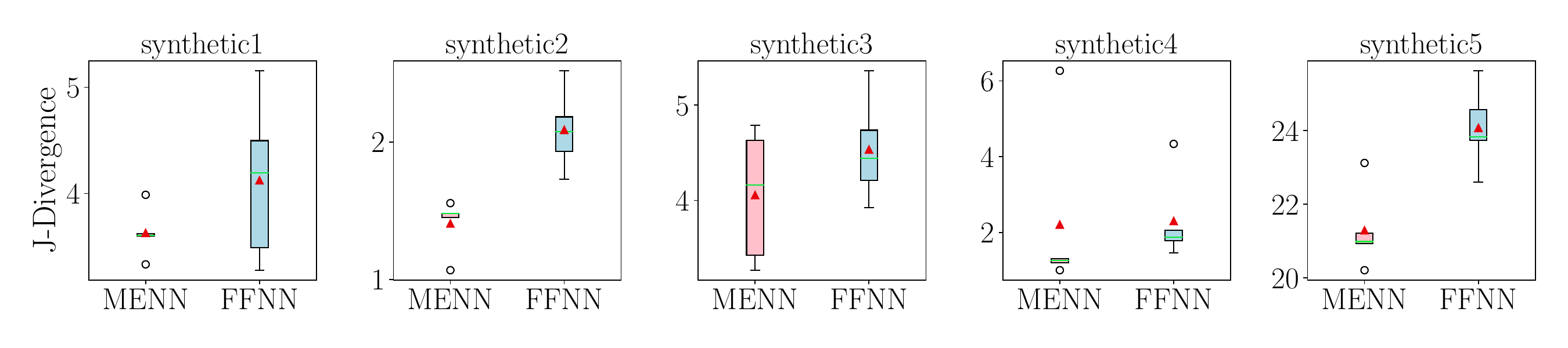}
\caption{Jeffreys  (J)-Divergence of data drawn from PSI to the observed empirical data,
per considered architecture.
{\scriptsize\red{\ding{115}}} is the mean and {\color{green}{\textbf{--}}\color{black}} is the median of results across folds, error bars denote the 1.5 interquartile range.
We observe that data simulated using MENN leads to values closer to the observed data distribution, as indicated by lower divergences.
}
\label{fig:amrginals}
\vspace{-1ex}
\end{figure*}

\subsection{Predictive Performance Results}
\label{sec:predictive}

In this section, we perform 5-fold cross validation to  assess, in addition to PSI's benefits on explainability, its predictive performance.
Figure~\ref{tab:realresults} presents a comparison between PSI and several established baselines in four real-world regression and classification datasets
---details on the baselines and the datasets are provided in Appendix~\ref{app:baseline} and ~\ref{app:realworld}, respectively.
Overall, PSI achieves predictive performance in par with not only
state-of-the-art interpretable methods,
but strong black-box models such as Random Forests, XG-Boost and deep neural networks.\footnote{
Reported results are on a test-set, based on optimum model hyperparameter configurations found via cross-validation, as described in Appendix~\ref{app:hyper}.
}

\begin{table}[ht]
\vspace{-2ex}
  \centering
  \begin{minipage}{0.55\linewidth}
    \centering
\begin{tabular}{c|ccc|cccc|}
\cline{2-8}
                                                   & \multicolumn{3}{c|}{Interpretable}                                                                                                                                                 & \multicolumn{4}{c|}{Black-box}                                    \\ \hline
\multicolumn{1}{|c|}{Data}                         & \cellcolor[HTML]{EFEFEF}{\textbf {PSI}}                             & \cellcolor[HTML]{EFEFEF}LIN                          & \cellcolor[HTML]{EFEFEF}EBM                                   & RF             & LGBM           & XGB            & DNN            \\ \hline
\multicolumn{1}{|c|}{\cellcolor[HTML]{CBCEFB}PKSN} & \cellcolor[HTML]{EFEFEF}{\color[HTML]{3531FF} \textbf{0.026}} & \cellcolor[HTML]{EFEFEF}{\color[HTML]{000000} 0.867} & \cellcolor[HTML]{EFEFEF}0.195                                 & 0.040          & 0.072          & 0.063          & 0.111          \\
\multicolumn{1}{|c|}{\cellcolor[HTML]{CBCEFB}MED}  & \cellcolor[HTML]{EFEFEF}{\color[HTML]{3531FF} \textbf{0.447}} & \cellcolor[HTML]{EFEFEF}{\color[HTML]{000000} 0.608} & \cellcolor[HTML]{EFEFEF}{\color[HTML]{3531FF} \textbf{0.447}} & 0.448          & \textbf{0.447} & 0.454          & 0.467          \\
\multicolumn{1}{|c|}{\cellcolor[HTML]{CBCEFB}BIKE} & \cellcolor[HTML]{EFEFEF}{\color[HTML]{3531FF} 0.019}          & \cellcolor[HTML]{EFEFEF}{\color[HTML]{000000} 0.518} & \cellcolor[HTML]{EFEFEF}0.038                                 & \textbf{0.008} & 0.017          & 0.013          & 0.019          \\
\multicolumn{1}{|c|}{\cellcolor[HTML]{CBCEFB}GAS}  & \cellcolor[HTML]{EFEFEF}0.269                                 & \cellcolor[HTML]{EFEFEF}{\color[HTML]{000000} 60.93} & \cellcolor[HTML]{EFEFEF}{\color[HTML]{3531FF} 0.153}          & 0.151          & 0.152          & \textbf{0.150} & 0.256          \\
\hline
\multicolumn{1}{|c|}{\cellcolor[HTML]{FFCCC9}FICO} & \cellcolor[HTML]{EFEFEF}{\color[HTML]{3531FF} {0.771}} & \cellcolor[HTML]{EFEFEF}{\color[HTML]{000000} 0.766} & \cellcolor[HTML]{EFEFEF}{\color[HTML]{3531FF} {0.771}} & 0.770          & \textbf{0.773}          & \textbf{0.773}          & 0.771          \\
\multicolumn{1}{|c|}{\cellcolor[HTML]{FFCCC9}SPAM} & \cellcolor[HTML]{EFEFEF}{\color[HTML]{3531FF} 0.977}          & \cellcolor[HTML]{EFEFEF}{\color[HTML]{000000} 0.946} & \cellcolor[HTML]{EFEFEF}{\color[HTML]{3531FF} 0.977}          & \textbf{0.983} & 0.982          & 0.981          & 0.967          \\
\multicolumn{1}{|c|}{\cellcolor[HTML]{FFCCC9}ICU}  &\cellcolor[HTML]{EFEFEF}{\color[HTML]{3531FF} 0.872}                                & \cellcolor[HTML]{EFEFEF}{\color[HTML]{000000} 0.851} & \cellcolor[HTML]{EFEFEF}{\color[HTML]{3531FF} 0.872}          & 0.869          & \textbf{0.874} & \textbf{0.874} & 0.860          \\
\multicolumn{1}{|c|}{\cellcolor[HTML]{FFCCC9}CENS} & \cellcolor[HTML]{EFEFEF}0.803                                 & \cellcolor[HTML]{EFEFEF}{\color[HTML]{000000} 0.768} & \cellcolor[HTML]{EFEFEF}{\color[HTML]{3531FF} 0.825}          & 0.800          & 0.830          & \textbf{0.831}          & 0.788          \\ \hline
\end{tabular}%
  \end{minipage}%
  \hfill
  \begin{minipage}{0.30\linewidth}
    \small 
    \captionof{table}{
    Averaged results
    (see Appendix \ref{app:results} for their negligible standard errors)
    on 8 real-world datasets.
    We report RMSE for regression (\color[HTML]{cbcefb}{\textbf{purple}}\color{black})
    and PR-AUC for classification (\color[HTML]{ffccc9}{\textbf{pink}}\color{black}) datasets.
    We indicate best overall performing models in \textbf{bold},
    with best performing explainable models in \color{blue}{blue}\color{black},
    where lower RMSE and higher PR-AUC scores are better.}
\label{tab:realresults}
\end{minipage}
\end{table}

\section{Discussion and Limitations}
PSI provides a probabilistic framework for feature attribution by modeling Shapley values as latent random variables with input-conditional Shapley priors.
Inference over latents and learning of feature attribution distributions is enabled by efficient variational inference.
We acknowledge that PSI does not explicitly enforce the 
$f_d = \phi_d$ constraint,
instead relying on regularization through the $\dshap$ term.
Diagnostic tools, as described Remark~\ref{rem:1},
assist practitioners in evaluating this alignment;
as validated by the presented empirical performance in different case studies.
However,
we recall that the condition $\Ex{\vx}{f_d(\vx)} \rightarrow 0$ is necessary but not sufficient for guaranteeing $f_d = \phi_d$. 
Looking ahead,
promising directions for extending PSI include support for multinomial likelihoods and the incorporation of causal feature attributions.

\section{Conclusion}
We presented Probabilistic Shapley inference (PSI),
a probabilistic framework for feature attribution that models Shapley values as latent random variables with input-conditional Shapley priors,
linking attribution uncertainty to predictive uncertainty.
We present a revised variational inference framework that efficiently,
via stochastic optimization, enables simultaneous inference over the latent Shapley values of a flexible (\eg neural network-based) predictive model.
Empirical results across synthetic and real-world datasets demonstrate
not only the explainability benefits of the probabilistic treatment of feature attributions intrinsic to PSI, 
but its flexibility in accurately describing data marginals,
feature attributions and their uncertainty,
while providing state-of-the-art predictive performance.


\bibliography{main}

\begin{thebibliography}{43}
\providecommand{\natexlab}[1]{#1}
\providecommand{\url}[1]{\texttt{#1}}
\expandafter\ifx\csname urlstyle\endcsname\relax
  \providecommand{\doi}[1]{doi: #1}\else
  \providecommand{\doi}{doi: \begingroup \urlstyle{rm}\Url}\fi

\bibitem[Adebayo et~al.(2021)Adebayo, Muelly, Abelson, and Kim]{adebayo2021post}
Julius Adebayo, Michael Muelly, Harold Abelson, and Been Kim.
\newblock Post hoc explanations may be ineffective for detecting unknown spurious correlation.
\newblock In \emph{International conference on learning representations}, 2021.

\bibitem[Alaa \& Van Der~Schaar(2020)Alaa and Van Der~Schaar]{alaa2020discriminative}
Ahmed Alaa and Mihaela Van Der~Schaar.
\newblock Discriminative jackknife: Quantifying uncertainty in deep learning via higher-order influence functions.
\newblock In \emph{International Conference on Machine Learning}, pp.\  165--174. PMLR, 2020.

\bibitem[Bou et~al.(2023)Bou, Bettini, Dittert, Kumar, Sodhani, Yang, De~Fabritiis, and Moens]{bou2023torchrl}
Albert Bou, Matteo Bettini, Sebastian Dittert, Vikash Kumar, Shagun Sodhani, Xiaomeng Yang, Gianni De~Fabritiis, and Vincent Moens.
\newblock Torchrl: A data-driven decision-making library for pytorch.
\newblock \emph{arXiv preprint arXiv:2306.00577}, 2023.

\bibitem[Bramlage \& Curio(2025)Bramlage and Curio]{bramlage2025principled}
Lennart Bramlage and Crist{\'o}bal Curio.
\newblock Principled input-output-conditioned post-hoc uncertainty estimation for regression networks.
\newblock \emph{arXiv preprint arXiv:2506.00918}, 2025.

\bibitem[Caruana et~al.(2015)Caruana, Lou, Gehrke, Koch, Sturm, and Elhadad]{caruana2015intelligible}
Rich Caruana, Yin Lou, Johannes Gehrke, Paul Koch, Marc Sturm, and Noemie Elhadad.
\newblock {I}ntelligible {M}odels {f}or {H}ealthcare: {P}redicting {P}neumonia {R}isk {a}nd {H}ospital {3}0-day {R}eadmission.
\newblock In \emph{Proceedings of the 21th ACM SIGKDD international conference on knowledge discovery and data mining}, pp.\  1721--1730, 2015.

\bibitem[Chan et~al.(2020)Chan, Alaa, Qian, and Van Der~Schaar]{chan2020unlabelled}
Alex Chan, Ahmed Alaa, Zhaozhi Qian, and Mihaela Van Der~Schaar.
\newblock Unlabelled data improves bayesian uncertainty calibration under covariate shift.
\newblock In \emph{International conference on machine learning}, pp.\  1392--1402. PMLR, 2020.

\bibitem[Chau et~al.(2023)Chau, Muandet, and Sejdinovic]{chau2023explaining}
Siu~Lun Chau, Krikamol Muandet, and Dino Sejdinovic.
\newblock Explaining the uncertain: Stochastic shapley values for gaussian process models.
\newblock \emph{arXiv preprint arXiv:2305.15167}, 2023.

\bibitem[Chen \& Guestrin(2016)Chen and Guestrin]{chen2016xgboost}
Tianqi Chen and Carlos Guestrin.
\newblock {X}gboost: {A} {S}calable {T}ree {B}oosting {S}ystem.
\newblock In \emph{Proceedings of the 22nd ACM SIGKDD International Conference on Knowledge Discovery and Data Mining}, pp.\  785--794, 2016.

\bibitem[Covert \& Lee(2021)Covert and Lee]{covert2021improving}
Ian Covert and Su-In Lee.
\newblock Improving kernelshap: Practical shapley value estimation using linear regression.
\newblock In \emph{International Conference on Artificial Intelligence and Statistics}, pp.\  3457--3465. PMLR, 2021.

\bibitem[Covert et~al.(2020)Covert, Lundberg, and Lee]{covert2020understanding}
Ian Covert, Scott~M Lundberg, and Su-In Lee.
\newblock Understanding global feature contributions with additive importance measures.
\newblock \emph{Advances in Neural Information Processing Systems}, 33:\penalty0 17212--17223, 2020.

\bibitem[Covert et~al.(2021)Covert, Lundberg, and Lee]{covert2021explaining}
Ian~C Covert, Scott Lundberg, and Su-In Lee.
\newblock Explaining by removing: A unified framework for model explanation.
\newblock \emph{The Journal of Machine Learning Research}, 22\penalty0 (1):\penalty0 9477--9566, 2021.

\bibitem[Datta et~al.(2016)Datta, Sen, and Zick]{datta2016algorithmic}
Anupam Datta, Shayak Sen, and Yair Zick.
\newblock Algorithmic transparency via quantitative input influence: Theory and experiments with learning systems.
\newblock In \emph{2016 IEEE symposium on security and privacy (SP)}, pp.\  598--617. IEEE, 2016.

\bibitem[Devlin et~al.(2018)Devlin, Chang, Lee, and Toutanova]{devlin2018bert}
Jacob Devlin, Ming-Wei Chang, Kenton Lee, and Kristina Toutanova.
\newblock Bert: Pre-training of deep bidirectional transformers for language understanding.
\newblock \emph{arXiv preprint arXiv:1810.04805}, 2018.

\bibitem[Fanaee-T(2013)]{misc_bike_sharing_dataset_275}
Hadi Fanaee-T.
\newblock {Bike Sharing Dataset}.
\newblock UCI Machine Learning Repository, 2013.
\newblock {DOI}: https://doi.org/10.24432/C5W894.

\bibitem[{FICO}(2018)]{fico}
{FICO}.
\newblock Fico explainable machine learning challenge.
\newblock \url{https://community.fico.com/s/explainable-machine-learning-challenge}, 2018.

\bibitem[Germain et~al.(2015)Germain, Gregor, Murray, and Larochelle]{germain2015made}
Mathieu Germain, Karol Gregor, Iain Murray, and Hugo Larochelle.
\newblock Made: Masked autoencoder for distribution estimation.
\newblock In \emph{International conference on machine learning}, pp.\  881--889. PMLR, 2015.

\bibitem[Ghorbani et~al.(2019)Ghorbani, Abid, and Zou]{ghorbani2019interpretation}
Amirata Ghorbani, Abubakar Abid, and James Zou.
\newblock Interpretation of neural networks is fragile.
\newblock In \emph{Proceedings of the AAAI conference on artificial intelligence}, volume~33, pp.\  3681--3688, 2019.

\bibitem[Hopkins et~al.(1999)Hopkins, Reeber, Forman, and Suermondt]{misc_spambase_94}
Mark Hopkins, Erik Reeber, George Forman, and Jaap Suermondt.
\newblock {Spambase}.
\newblock UCI Machine Learning Repository, 1999.
\newblock {DOI}: https://doi.org/10.24432/C53G6X.

\bibitem[Jethani et~al.(2021)Jethani, Sudarshan, Covert, Lee, and Ranganath]{jethani2021fastshap}
Neil Jethani, Mukund Sudarshan, Ian~Connick Covert, Su-In Lee, and Rajesh Ranganath.
\newblock Fastshap: Real-time shapley value estimation.
\newblock In \emph{International Conference on Learning Representations}, 2021.

\bibitem[Ke et~al.(2017)Ke, Meng, Finley, Wang, Chen, Ma, Ye, and Liu]{ke2017lightgbm}
Guolin Ke, Qi~Meng, Thomas Finley, Taifeng Wang, Wei Chen, Weidong Ma, Qiwei Ye, and Tie-Yan Liu.
\newblock Lightgbm: A highly efficient gradient boosting decision tree.
\newblock \emph{Advances in neural information processing systems}, 30, 2017.

\bibitem[Kohavi(1996)]{misc_census_income_20}
Ron Kohavi.
\newblock {Census Income}.
\newblock UCI Machine Learning Repository, 1996.
\newblock {DOI}: https://doi.org/10.24432/C5GP7S.

\bibitem[Lantz(2019)]{lantz2019machine}
Brett Lantz.
\newblock \emph{{M}achine {L}earning {w}ith {R}: {E}xpert {t}echniques {f}or {p}redictive {m}odeling}.
\newblock Packt publishing ltd, 2019.

\bibitem[Lee et~al.(2020)Lee, Zhang, Zame, Shen, Lee, and van~der Schaar]{lee2020robust}
Hyun-Suk Lee, Yao Zhang, William Zame, Cong Shen, Jang-Won Lee, and Mihaela van~der Schaar.
\newblock Robust recursive partitioning for heterogeneous treatment effects with uncertainty quantification.
\newblock \emph{Advances in Neural Information Processing Systems}, 33:\penalty0 2282--2292, 2020.

\bibitem[Lou et~al.(2013)Lou, Caruana, Gehrke, and Hooker]{lou2013accurate}
Yin Lou, Rich Caruana, Johannes Gehrke, and Giles Hooker.
\newblock {A}ccurate {I}ntelligible {M}odels {w}ith {P}airwise {I}nteractions.
\newblock In \emph{Proceedings of the 19th ACM SIGKDD International Conference on Knowledge Discovery and Data Mining}, pp.\  623--631, 2013.

\bibitem[Lucas(2015)]{misc_greenhouse_gas_observing_network_328}
D.~Lucas.
\newblock {Greenhouse Gas Observing Network}.
\newblock UCI Machine Learning Repository, 2015.
\newblock {DOI}: https://doi.org/10.24432/C5JK5M.

\bibitem[Lundberg \& Lee(2017)Lundberg and Lee]{lundberg2017unified}
Scott~M Lundberg and Su-In Lee.
\newblock A unified approach to interpreting model predictions.
\newblock \emph{Advances in neural information processing systems}, 30, 2017.

\bibitem[Lundberg et~al.(2020)Lundberg, Erion, Chen, DeGrave, Prutkin, Nair, Katz, Himmelfarb, Bansal, and Lee]{lundberg2020local}
Scott~M Lundberg, Gabriel Erion, Hugh Chen, Alex DeGrave, Jordan~M Prutkin, Bala Nair, Ronit Katz, Jonathan Himmelfarb, Nisha Bansal, and Su-In Lee.
\newblock From local explanations to global understanding with explainable ai for trees.
\newblock \emph{Nature machine intelligence}, 2\penalty0 (1):\penalty0 56--67, 2020.

\bibitem[Meredith et~al.(2020)Meredith, Raffa, Ghassemi, Pollard, Kalanidhi, Badawi, Matthys, and Celi]{icu2020data}
Meredith Meredith, Jesse Raffa, Marzyeh Ghassemi, Tom Pollard, Sharada Kalanidhi, Omar Badawi, Karen Matthys, and Leo~Anthony Celi.
\newblock {W}ids ({W}omen {i}n {D}ata {S}cience) {D}atathon {2}020: {I}cu {M}ortality {P}rediction.
\newblock \emph{Journal of Statistics Education}, 2020.

\bibitem[Merrick \& Taly(2020)Merrick and Taly]{merrick2020explanation}
Luke Merrick and Ankur Taly.
\newblock The explanation game: Explaining machine learning models using shapley values.
\newblock In \emph{Machine Learning and Knowledge Extraction: 4th IFIP TC 5, TC 12, WG 8.4, WG 8.9, WG 12.9 International Cross-Domain Conference, CD-MAKE 2020, Dublin, Ireland, August 25--28, 2020, Proceedings 4}, pp.\  17--38. Springer, 2020.

\bibitem[Nori et~al.(2019)Nori, Jenkins, Koch, and Caruana]{nori2019interpretml}
Harsha Nori, Samuel Jenkins, Paul Koch, and Rich Caruana.
\newblock {I}nterpretml: {A} {U}nified {F}ramework {f}or {M}achine {L}earning {I}nterpretability.
\newblock \emph{arXiv preprint arXiv:1909.09223}, 2019.

\bibitem[Papamakarios et~al.(2017)Papamakarios, Pavlakou, and Murray]{papamakarios2017masked}
George Papamakarios, Theo Pavlakou, and Iain Murray.
\newblock Masked autoregressive flow for density estimation.
\newblock \emph{Advances in neural information processing systems}, 30, 2017.

\bibitem[Pedregosa et~al.(2011)Pedregosa, Varoquaux, Gramfort, Michel, Thirion, Grisel, Blondel, Prettenhofer, Weiss, Dubourg, et~al.]{pedregosa2011scikit}
Fabian Pedregosa, Ga{\"e}l Varoquaux, Alexandre Gramfort, Vincent Michel, Bertrand Thirion, Olivier Grisel, Mathieu Blondel, Peter Prettenhofer, Ron Weiss, Vincent Dubourg, et~al.
\newblock {S}cikit-learn: {M}achine {l}earning {i}n {P}ython.
\newblock \emph{Journal of Machine Learning Research}, 12:\penalty0 2825--2830, 2011.

\bibitem[Rozemberczki et~al.(2022)Rozemberczki, Watson, Bayer, Yang, Kiss, Nilsson, and Sarkar]{rozemberczki2022shapley}
Benedek Rozemberczki, Lauren Watson, P{\'e}ter Bayer, Hao-Tsung Yang, Oliv{\'e}r Kiss, Sebastian Nilsson, and Rik Sarkar.
\newblock The shapley value in machine learning.
\newblock \emph{arXiv preprint arXiv:2202.05594}, 2022.

\bibitem[Rudin(2019)]{rudin2019stop}
Cynthia Rudin.
\newblock Stop explaining black box machine learning models for high stakes decisions and use interpretable models instead.
\newblock \emph{Nature machine intelligence}, 1\penalty0 (5):\penalty0 206--215, 2019.

\bibitem[Shapley et~al.(1953)]{shapley1953value}
Lloyd~S Shapley et~al.
\newblock A value for n-person games.
\newblock 1953.

\bibitem[Slack et~al.(2021)Slack, Hilgard, Singh, and Lakkaraju]{slack2021reliable}
Dylan Slack, Anna Hilgard, Sameer Singh, and Himabindu Lakkaraju.
\newblock Reliable post hoc explanations: Modeling uncertainty in explainability.
\newblock \emph{Advances in neural information processing systems}, 34:\penalty0 9391--9404, 2021.

\bibitem[Stirn et~al.(2023)Stirn, Wessels, Schertzer, Pereira, Sanjana, and Knowles]{stirn2023faithful}
Andrew Stirn, Harm Wessels, Megan Schertzer, Laura Pereira, Neville Sanjana, and David Knowles.
\newblock Faithful heteroscedastic regression with neural networks.
\newblock In \emph{International Conference on Artificial Intelligence and Statistics}, pp.\  5593--5613. PMLR, 2023.

\bibitem[Strumbelj \& Kononenko(2010)Strumbelj and Kononenko]{strumbelj2010efficient}
Erik Strumbelj and Igor Kononenko.
\newblock An efficient explanation of individual classifications using game theory.
\newblock \emph{The Journal of Machine Learning Research}, 11:\penalty0 1--18, 2010.

\bibitem[Sundararajan \& Najmi(2020)Sundararajan and Najmi]{sundararajan2020many}
Mukund Sundararajan and Amir Najmi.
\newblock The many shapley values for model explanation.
\newblock In \emph{International conference on machine learning}, pp.\  9269--9278. PMLR, 2020.

\bibitem[Tsanas et~al.(2009)Tsanas, Little, McSharry, and Ramig]{tsanas2009accurate}
Athanasios Tsanas, Max Little, Patrick McSharry, and Lorraine Ramig.
\newblock Accurate telemonitoring of parkinson’s disease progression by non-invasive speech tests.
\newblock \emph{Nature Precedings}, pp.\  1--1, 2009.

\bibitem[Van~den Broeck et~al.(2022)Van~den Broeck, Lykov, Schleich, and Suciu]{van2022tractability}
Guy Van~den Broeck, Anton Lykov, Maximilian Schleich, and Dan Suciu.
\newblock On the tractability of shap explanations.
\newblock \emph{Journal of Artificial Intelligence Research}, 74:\penalty0 851--886, 2022.

\bibitem[Vaswani et~al.(2017)Vaswani, Shazeer, Parmar, Uszkoreit, Jones, Gomez, Kaiser, and Polosukhin]{vaswani2017attention}
Ashish Vaswani, Noam Shazeer, Niki Parmar, Jakob Uszkoreit, Llion Jones, Aidan~N Gomez, {\L}ukasz Kaiser, and Illia Polosukhin.
\newblock Attention is all you need.
\newblock \emph{Advances in neural information processing systems}, 30, 2017.

\bibitem[Yoon et~al.(2018)Yoon, Jordon, and van~der Schaar]{yoon2018invase}
Jinsung Yoon, James Jordon, and Mihaela van~der Schaar.
\newblock Invase: Instance-wise variable selection using neural networks.
\newblock In \emph{International Conference on Learning Representations}, 2018.

\end{thebibliography}
\bibliographystyle{tmlr}

\newpage

\appendix

\section{Proof of Proposition~\ref{prop:expval}}
\label{app:proof1}

The proof of Proposition~\ref{prop:expval} follows:

For $\cqreg{\varphi_d}{\vx} = \mathcal{N}(f_d(\vx), \sigma_d^2(\vx))$ and any function $h$,
the \textbf{efficiency} property of Shapley values guarantees that
$\Ex{\qregi}{ h\left( \phi_0 + \sum_{d=1}^D \varphi_{i,d} \right)}$ is equal to $\Ex{\cp{\Phi_i}{\vx_i}}{ h \left( \phi_0 + \sum_{d=1}^D \varphi_{i,d} \right)}$.

\begin{proof}
Showing that $\phi_0 + \sum_{d=1}^D \varphi_{i,d}$ is the same random variable under $\sP$ and $\sQ$ concludes the proof. 

Since, $\cq{\varphi_{d}}{\vx} = \N{f_d(\vx), \sigma_d^2(\vx)}$, $\phi_0 + \sum_{d=1}^D \varphi_{i,d} \sim  \N{\phi_0 + \sum_d^D f_d(\vx), \sum_d^D \sigma_d^2(\vx)}$, under  $\sQ$, and

given $\cp{\varphi_{d}}{\vx} = \N{\sum_{S \subseteq [D] \setminus \{d\}} \p{S} \left( f(\vx_{S\cup \{d\}}) - f(\vx_s) \right), \sigma_d^2(\vx)}$,  

$\phi_0 + \sum_{d=1}^D \varphi_{i,d} \sim  \N{\phi_0 + \sum_d^D f_d(\vx), \sum_d^D \sigma_d^2(\vx)}$, under  $\sP$ due to the efficiency property.  

Since sum over $\varphi$ have the same distribution under $\sP$ and $\sQ$, the expected values with respect to these distributions are the same.
\end{proof}

\section{Details on PSI's Variational Evidence Lower-Bounds}
\label{app:elbo}
We describe here the procedures to bound data marginals $\log \cp{y}{\vx}$ and $\log \cp{\1}{\vx}$ for PSI on regression and classification tasks.

\paragraph{Regression.}
By direct application of Proposition~\ref{prop:expval}, we write
\begin{equation}
\gL_{reg}^{ELBO} \leq \gV_{reg} = \sum_{i=1}^N \log \cp{y_i}{\vx_i} - \beta\kl{\qregi}{\cp{\Phi_i}{\vx_i}} \leq \gL_{reg} \;,
\end{equation}
which follows from $\gL_{reg} = \sum_{i=1}^N \log \cp{y_i}{\vx_i}$ given that $ \kl{\qregi}{ \cp{\Phi_i}{\vx_i}} \geq 0$.

The inequality is valid for $0 \leq \beta \leq 1$, with $\gV_{reg} = \gL_{reg}$ for $\beta=0$. Note that the lower-bound over the marginal log-likelihood is valid $ \forall \beta \geq 0$,
and the upper bound $\beta \leq 1$ is only necessary to keep the $\gV_{reg} \geq \gL_{reg}^{ELBO}$ constraint, with equality when $\beta =1$.

\paragraph{Classification.}
For binary classification,
we start by writing the ELBO as

%

\begin{align}
    \gL_{class}^{ELBO} &= \sum_{i=1}^N \Ex{\qclassi\qregi}{\log\cp{\1_i}{y_i} \cp{y_i}{\Phi_i}\cp{\Phi_i}{\vx_i} - \log\qclassi\qregi }\\
        &= \sum_{i=1}^N \Ex{\qclassi}{\log\cp{\1_i}{y_i}  -\log \qclassi + \Ex{\qregi}{\log\cp{y_i}{\Phi_i}- \kl{\qregi}{\cp{\Phi_i}{\vx_i}}}}\\
    &= \sum_{i=1}^N \Ex{\qclassi}{\log\cp{\1_i}{y_i} -\log\qclassi +{\gL_{{reg}_i}^{ELBO}}}\\
    &= \sum_{i=1}^N \Ex{\qclassi}{\log\cp{\1_i}{y_i} +{\gL_{{reg}_i}^{ELBO}}} + \gH(\qclassi),
\end{align}

where $\gH(\qclassi)$ is the entropy of the variational logit distribution $\qclassi$. 

Since $\gL_{{reg}_i}^{ELBO} \leq \gV_{{reg}_i}$

\begin{align}
    \gL_{class}^{ELBO} \leq \gV_{class}
    &=\sum_{i=1}^N \Ex{\qclassi}{\log\cp{\1_i}{y_i} +\gV_{reg_i}} + \gH(\qclassi) \leq \gL_{class} \;, \\
    &=\sum_{i=1}^N 
        \Ex{\qclassi}{\log\cp{\1_i}{y_i} + \log \cp{y_i}{\vx_i} }
    + \gH(\qclassi) - \beta \kl{\qregi}{ \cp{\Phi_i}{\vx_i} }
    \leq \gL_{class} \;.
\end{align}

\section{Feature Removal and the Subset Marginal Constraint During Optimization}
\label{app:remove}

In this section, we show that for PSI's variational regression and classification loss functions $\gV_{reg}$ and $\gV_{class}$, introducing missing values during training satisfies the integral constraint:
\begin{equation}
    f(\vx_s) = \int f(\vx) \diff{\cp{\vx}{\vx_s}} \;.
\end{equation}

The proof for regression tasks is straightforward, as PSI model learning is carried out by optimizing the marginal log-likelihood, which is a proper scoring rule.
For classification, we show that the integral constraint is still satisfied under two assumptions.  

We define the notation for the following paragraphs, distinguishing true data generating distributions with $\p{\cdot}$ and using parameterized distributions (e.g., $\theta$) to denote model and variational families:
\begin{itemize}
    \item $\theta$: Model and variational parameters
    \item $\cp{\cdot}{\theta}$: Model  distribution
    \item $\cq{\cdot}{\theta}$: Variational distribution
    \item $\cp{\cdot}{\cdot}$ (notice, without $\theta$): True data generating distribution (analytical form is often unknown)
    \item $\vx_{s}$: Variable length input features (\ie a subset of features fed to the neural network, where features $d$ not in $s$ (\ie $d \notin s$) are replaced by their corresponding baseline vector $\vb_d$, as described in Section \ref{sec:arch}.
\end{itemize}

\paragraph{Regression.} 

Maximizing $\gV_{reg}$ by introducing missing values during training is equal to minimizing the following divergence:

\begin{equation}
\Ex{\p{\vx}}{\kl{\cp{y}{\vx}}{\cp{y}{\vx_s; \theta}} + \beta \kl{\cq{\Phi}{\vx_s;\theta}}{\cp{\Phi}{\vx_s;\theta}}}.\label{eq:missreg}
\end{equation}

This follows from the fact that Equation \ref{eq:missreg} bounds $-\frac{1}{N} \gV_{reg} \xrightarrow{p} -\Ex{\p{\vx}}{\gV_{{reg}_i}}$, as $N \rightarrow \infty$, by a constant factor $C_1$:

\begin{align}
&\Ex{\p{\vx}}{\kl{\cp{y}{\vx}}{\cp{y}{\vx_s; \theta}} + \beta \kl{\cq{\Phi}{\vx_s;\theta}}{\cp{\Phi}{\vx_s;\theta}}}\\
&= \underbrace{-\Ex{\p{\vx}\cp{y}{\vx}}{\log \cp{y}{\vx_s; \theta}  + \beta  \kl{\cq{\Phi}{\vx_s;\theta}}{\cp{\Phi}{\vx_s;\theta}}}}_{-\Ex{\p{\vx}}{\gV_{{reg}_i}}} + \underbrace{\Ex{\p{\vx}\cp{y}{\vx}}{\log \cp{y}{\vx}}}_{C_1}
\end{align}

$C_1 = \Ex{\p{\vx}\cp{y}{\vx}}{\log \cp{y}{\vx}}$ is a constant with respect to model parameters and depends on the data distribution.

Moving forward,
\begin{equation}
    - \frac{1}{N} \gV_{reg} + C_1 = \Ex{\p{\vx}\cp{y}{\vx}}{\log \frac{\cp{y}{\vx}}{\cp{y}{\vx_s; \theta}}  + \beta \kl{\cq{\Phi}{\vx_s;\theta}}{\cp{\Phi}{\vx_s;\theta}}} \;.
\end{equation}

By dividing and multiplying by the data feature (covariate) distribution $\p{\vx}$ the log-ratio within the first term of the right hand side:
\begin{equation}
    - \frac{1}{N}  \gV_{reg} + C_1 = \Ex{\p{\vx}\cp{y}{\vx}}{\log \frac{\cp{y}{\vx}\p{\vx}}{\cp{y}{\vx_s; \theta}\p{\vx}} + \beta  \kl{\cq{\Phi}{\vx_s;\theta}}{\cp{\Phi}{\vx_s;\theta}}},
\end{equation}
and rearranging terms according to Bayes' rule, we have:
\begin{equation}
    - \frac{1}{N} \gV_{reg} + C_1 = \Ex{\p{\vx}\cp{y}{\vx}}{\log \frac{\cp{y}{\vx_s}\cp{\vx_{-s}}{\vx_s, y}\p{\vx_s}}{\cp{y}{\vx_s; \theta}\cp{\vx_{-s}}{\vx_s}\p{\vx_s}} + \beta \kl{\cq{\Phi}{\vx_s;\theta}}{\cp{\Phi}{\vx_s;\theta}}} \;.
\end{equation}
Finally, we group expectations in terms of KL divergences:
\begin{align}
  - \frac{1}{N} \gV_{reg} + C_1 &=  \Ex{p(\vx_s)}{\kl{\cp{y}{\vx_s}}{\cp{y}{\vx_s;\theta}}} + \Ex{\p{\vx_s}}{\beta\kl{\cq{\Phi}{\vx_s;\theta}}{\cp{\Phi}{\vx_s;\theta}}} \nonumber\\
  &+ \underbrace{\Ex{\p{y, \vx}}{\kl{\cp{\vx_{-s}}{\vx_s, y}}{\cp{\vx_{-s}}{\vx_s}}}}_{C_2} \nonumber \\
  &=  \Ex{p(\vx_s)}{\underbrace{\kl{\cp{y}{\vx_s}}{\cp{y}{\vx_s;\theta}}}_\text{(I)}} + \Ex{\p{\vx_s}}{\underbrace{\beta\kl{\cq{\Phi}{\vx_s;\theta}}{\cp{\Phi}{\vx_s;\theta}}}_\text{(II)}} + C_2.
\end{align}

Note that, $C_2$ is a constant that depends on data distribution (\eg if $\vx_{-s} \perp y \mid \vx_s$, then $C_2=0$).

The above quantity is minimized when (I) model marginal is equal to data distribution's marginal, \ie $\cp{y}{\vx_s}= \cp{y}{\vx_s;\theta}$ and (II) Shapley Kullback-Leibler divergence is $0$.

Therefore, for a sufficiently flexible conditional model, the marginal likelihood will approach the data's marginal density,
while the variational distribution captures the computationally expensive model prior. 

\paragraph{Classification.}
The difference between the regression and classification objectives is that for the latter,
we also approximate latent $y$ values via a variational distribution, that will determine the logits of the observed labels $\1_i$.
In what follows, we assume that there exists a DGP under which $\cp{\1}{\vx} = \int\cp{\1}{y} \cp{y}{\vx} \diff{y}$,
\ie 
data generating $\cp{y}{\vx}$ exists and the integral can be computed.


Maximizing the $\gV_{class}$ by introducing missing values during training is equal to minimizing the following divergence:

\begin{equation}
    \Ex{\p{\vx}}{\kl{\cp{\1}{\vx}\cq{y}{\vx_s; \theta}}{\cp{\1}{y; \theta}\cp{y}{\vx_s; \theta} + \beta\kl{\cq{\Phi}{\vx_s;\theta}}{\cp{\Phi}{\vx_s;\theta}}}}\label{eq:missclass}
\end{equation}

The above follows because Equation \ref{eq:missclass} bounds $-\frac{1}{N} \gV_{class} \xrightarrow{p} -\Ex{\p{\vx}}{\gV_{{class}_i}}$, as $N \rightarrow \infty$, by a constant factor $C_3$:

\begin{align}
    &\Ex{\p{\vx}}{\kl{\cp{\1}{\vx}\cq{y}{\vx_s; \theta}}{\cp{\1}{y; \theta}\cp{y}{\vx_s; \theta}} + \beta\kl{\cq{\Phi}{\vx_s;\theta}}{\cp{\Phi}{\vx_s;\theta}}}\\
    &= \Ex{\p{\vx}}{\kl{\cp{\1}{\vx}\cq{y}{\vx_s; \theta}}{\cp{\1}{y; \theta}\cp{y}{\vx_s; \theta}} + \beta\kl{\cq{\Phi}{\vx_s;\theta}}{\cp{\Phi}{\vx_s;\theta}}}\\
    &= \underbrace{\Ex{\cq{y}{\vx_s;\theta}\p{\vx}\cp{\1}{\vx}}{ -\log \cp{\1}{y} + \kl{\cq{y}{\vx_s; \theta}}{\cp{y}{\vx_s; \theta}} + \beta\kl{\cq{\Phi}{\vx_s;\theta}}{\cp{\Phi}{\vx_s;\theta}}}}_{-\Ex{\p{\vx}}{\gV_{class}}} \\
    &+ \underbrace{\Ex{\p{\vx}\cp{\1}{\vx}}{\log \cp{\1}{\vx}}}_{C_3}
\end{align}

\begin{equation}
    - \frac{1}{N} \gV_{class} + C_3 = \Ex{\p{\vx}\cp{\1}{\vx}\cq{y}{\vx_s; \theta}}{\log \frac{\cp{\1}{\vx}\cq{y}{\vx_s; \theta}}{\cp{\1}{y; \theta}\cp{y}{\vx_s; \theta}}  + \beta\kl{\cq{\Phi}{\vx_s;\theta}}{\cp{\Phi}{\vx_s;\theta}}},
\end{equation}

Multiplying and dividing by the log-ratio within the first term with $\cp{y}{\vx_s}$, we have
\begin{equation}
 - \frac{1}{N} \gV_{class} + C_3  = \Ex{\p{\vx}\cp{\1}{\vx}\cq{y}{\vx_s; \theta}}{\log \frac{\cp{\1}{\vx}\cq{y}{\vx_s; \theta}\cp{y}{\vx_s}}{\cp{\1}{y; \theta}\cp{y}{\vx_s; \theta} \cp{y}{\vx_s}} + \beta\kl{\cq{\Phi}{\vx_s;\theta}}{\cp{\Phi}{\vx_s;\theta}}},
\end{equation}
which can be rearranged as:
\begin{align}
- \frac{1}{N} \gV_{class} + C_3  = \Ex{\p{\vx}\cp{\1}{\vx}\cq{y}{\vx_s; \theta}}{\log \frac{\cp{\1}{\vx}\cp{y}{\vx_s}}{\cp{\1}{y; \theta}\cp{y}{\vx_s; \theta}}}&+\kl{\cq{y}{\vx_s; \theta}}{ \cp{y}{\vx_s}} \nonumber \\
    & \quad \quad  + \Ex{\p{\vx_s}}{\beta\kl{\cq{\Phi}{\vx_s;\theta}}{\cp{\Phi}{\vx_s;\theta}}}.
\end{align}

Now, as the learning procedure evolves,
we assume $\kl{\cq{y}{\vx_s; \theta}}{ \cp{y}{\vx_s}} \rightarrow 0$:\footnote{
An assumption based on the universal approximation properties of neural networks.
}
\begin{align}
    - \frac{1}{N} \gV_{class} + C_3  = \Ex{\p{\vx}\cp{\1}{\vx}\cp{y}{\vx_s}}{\log \frac{\cp{\1}{\vx}\cp{y}{\vx_s}}{\cp{\1}{y; \theta}\cp{y}{\vx_s; \theta}}}
    +\Ex{\p{\vx_s}}{\beta\kl{\cq{\Phi}{\vx_s;\theta}}{\cp{\Phi}{\vx_s;\theta}}},
\end{align}

\begin{align}
  - \frac{1}{N} \gV_{class} + C_3  &= \Ex{\p{\vx}\cp{\1}{\vx}\cp{y}{\vx_s}}{\kl{\cp{\1}{\vx}}{\cp{\1}{y; \theta}}}\nonumber \\
   & \quad  + \underbrace{\Ex{p(\vx_s)}{\kl{\cp{y}{\vx_s}}{\cp{y}{\vx_s;\theta}}} + \Ex{\p{\vx_s}}{\kl{\cq{\Phi}{\vx_s;\theta}}{\cp{\Phi}{\vx_s;\theta}}}}_{- \frac{1}{N} \gV_{reg} + C_1 - C_2}\nonumber\\
   &= \Ex{\p{\vx}\cp{\1}{\vx}\cp{y}{\vx_s}}{\underbrace{\kl{\cp{\1}{\vx}}{\cp{\1}{y; \theta}}}_\text{(I)}- \underbrace{\frac{1}{N} \gV_{reg} + C_1 - C_2}_\text{(II)}} 
\end{align}

Namely,
under the assumption that the variational posterior is close to the data-generating posterior and is stable, PSI's classification loss $\gV_{class}$:
(I) encourages learning latent logit marginals, 
and (II) partitions their influence to their corresponding Shapley feature attributions via $\gV_{reg}$ (which includes $\dshap$). 

\paragraph{Conclusion.}
Under both losses, as learning proceeds with $\cp{y}{\vx_s; \theta} \xrightarrow{d} \cp{y}{\vx_s}$, we can write:
\begin{equation}
f(\vx_s) + \phi_0 = \Ex{\cp{y}{\vx_s; \theta}}{y} = \Ex{\cp{y}{\vx; \theta} \cp{\vx}{\vx_{s}}}{y} = \int f(\vx) \diff  \cp{\vx}{\vx_{s}} + \phi_0 .
\end{equation}
Hence, $f(\vx_s) = \int f(\vx) \diff  \cp{\vx}{\vx_{s}}$,
concluding that, under reasonable learning assumptions,
randomly removing feature dimensions during training meets the integral constraint.

\section{Stochastic estimation of the Shapley Kullback-Leibler Divergence}
\label{app:props}

In this section, we characterize the stochastic estimator of the Shapley Kullback-Leibler divergence and demonstrate that the gradients of the proposed stochastic estimator point in the same direction as the exact one's, in expectation.

We start by recalling the definition of the proposed stochastic estimator $\hatdshap$:
\begin{align}
& \hatdshap = D \frac{\left| \left(\underbrace{\frac{\sum_{k=1}^{K_1}f(\vx_{s_{1,k}\cup d}) - f(\vx_{s_{1,k}})}{K_1} - f_d(\vx)}_{\text{term 1}} \right)\left(\underbrace{\frac{\sum_{k=1}^{K_2}f(\vx_{s_{2,k}\cup d}) - f(\vx_{s_{2,k}})}{K_2} - f_d(\vx)}_{\text{term 2}} \right) \right|}{2\sigma_d(\vx)^2},
\end{align}
with $s_{1,k},\; s_{2,k} \sim \p{S}$, and $d \sim \text{U}({1, D})$.
We denote $\hat{\phi}_{j,d} = \frac{\sum_{k=1}^{K_1}f(\vx_{s_{j,k}\cup d}) - f(\vx_{s_{j,k}})}{K_j}$, and write:
\begin{align}
\hatdshap = D\frac{\left| \left(\underbrace{ \hat{\phi}_{1,d} - f_d(\vx)}_{\text{term 1}} \right)\left(\underbrace{\hat{\phi}_{2,d} - f_d(\vx)}_{\text{term 2}} \right) \right|}{2\sigma_d(\vx)^2}.
\end{align}

When computing stochastic estimates $\hatdshap$,
there are two possible cases that can occur:
either both terms in the numerator have the same sign, or they have different signs.
We denote these as two disjoint events:
\begin{equation}
    \begin{cases}
        \mathcal{A} := sign(\text{term 1}) =sign(\text{term 2}) \;,\\
        \mathcal{B} := sign(\text{term 1}) \neq sign(\text{term 2}) \;.
    \end{cases}
\end{equation}

To compute the probabilities of such events,
we first define intermediate probabilities as determined by the stochastic $\hat{\phi}_{j,d}$ terms
\begin{align}
\p{\hat{\phi}_{1,d} > f_d(\vx)} = \p{\hat{\phi}_{2,d} > f_d(\vx)} = p \;,\\
\p{\hat{\phi}_{1,d} < f_d(\vx)} = \p{\hat{\phi}_{2,d} < f_d(\vx)} = 1-p \;,
\end{align}
to conclude that
\begin{align}
\p{\mathcal{A}} &= \p{\hat{\phi}_{1,d} > f_d(\vx)}\p{\hat{\phi}_{2,d} > f_d(\vx)} + \p{\hat{\phi}_{1,d} < f_d(\vx)}\p{\hat{\phi}_{2,d} < f_d(\vx)} \\
&= p^2 + (1-p)^2 \;, \\
\p{\mathcal{B}} &= \p{\hat{\phi}_{1,d} < f_d(\vx)}  \p{\hat{\phi}_{2,d} > f_d(\vx)} +  \p{\hat{\phi}_{1,d} > f_d(\vx)}  \p{\hat{\phi}_{2,d} < f_d(\vx)} \\
&= 2p(1-p)\;.
\end{align}

We investigate each of these two events separately below.

\paragraph{Event $\mathcal{A}$: Both terms have the same sign.}
\begin{align}
\Ex{\p{s_1} \p{s_2} \textrm{U}({1, D})}{\hatdshap | \mathcal{A}} &= \Ex{\p{s_1} \p{s_2} \textrm{U}({1, D})}{D\frac{ \left( \hat{\phi}_{1,d} - f_d(\vx) \right)\left(\hat{\phi}_{2,d} - f_d(\vx)\right)}{2\sigma_d(\vx)^2}}\\
&= D\Ex{\p{s_1} \p{s_2} \textrm{U}({1, D})}{\frac{ \hat{\phi}_{1,d} \hat{\phi}_{2,d} -  f_d(\vx) \hat{\phi}_{1,d} - f_d(\vx) \hat{\phi}_{2,d} + f_d^2(\vx)}{2\sigma_d(\vx)^2}}\\
&= D\sum_{d=1}^D\frac{1}{D}{\frac{ \Ex{\p{s_1}}{\hat{\phi}_{1,d}} \Ex{\p{s_2}}{\hat{\phi}_{2,d}} -  f_d(\vx) \Ex{\p{s_1}}{\hat{\phi}_{1,d}} - f_d(\vx) \Ex{\p{s_2}}{\hat{\phi}_{2,d}} + f_d^2(\vx)}{2\sigma_d(\vx)^2}} \\
&\text{Because $\phi_d = \Ex{\p{s_1}}{\hat{\phi}_{1,d}} = \Ex{\p{s_2}}{\hat{\phi}_{2,d}} = \sum_{S \subseteq [D] \setminus \{d\}}\p{S}\left(f(\vx_{S\cup d}) - f(\vx_s)\right)$}\nonumber \\
&= D \sum_{d=1}^D\frac{1}{D}{\frac{ \phi_d^2 -  2 f_d(\vx)\phi_d + f_d^2(\vx)}{2\sigma_d(\vx)^2}} \\
&= D\sum_{d=1}^D\frac{1}{D}{\frac{ (\phi_d-f_d(\vx))^2}{2\sigma_d(\vx)^2}}\\
&= {D}_\textrm{SHAP}
\end{align}

\paragraph{Event $\mathcal{B}$: The terms have opposite signs.}
\begin{align}
\Ex{\p{s_1} \p{s_2} \textrm{U}({1, D})}{\hatdshap | \mathcal{B}} &= \Ex{\p{s_1} \p{s_2} \textrm{U}({1, D})}{D\frac{ \left( f_d(\vx) - \hat{\phi}_{1,d}  \right)\left(\hat{\phi}_{2,d} - f_d(\vx)\right)}{2\sigma_d(\vx)^2}}\\
&= D\Ex{\p{s_1} \p{s_2} \textrm{U}({1, D})}{\frac{ -\hat{\phi}_{1,d} \hat{\phi}_{2,d} +  f_d(\vx) \hat{\phi}_{1,d} + f_d(\vx) \hat{\phi}_{2,d} - f_d^2(\vx)}{2\sigma_d(\vx)^2}} \\
&= D\sum_{d=1}^D\frac{1}{D}{\frac{- \Ex{\p{s_1}}{\hat{\phi}_{1,d}} \Ex{\p{s_2}}{\hat{\phi}_{2,d}} +  f_d(\vx) \Ex{\p{s_1}}{\hat{\phi}_{1,d}} + f_d(\vx) \Ex{\p{s_2}}{\hat{\phi}_{2,d}} - f_d^2(\vx)}{2\sigma_d(\vx)^2}} \\
&\text{Because $\phi_d = \Ex{\p{s_1}}{\hat{\phi}_{1,d}} = \Ex{\p{s_2}}{\hat{\phi}_{2,d}} = \sum_{S \subseteq [D] \setminus \{d\}}\p{S}\left(f(\vx_{S\cup d}) - f(\vx_s)\right)$}\nonumber \\
&= D\sum_{d=1}^D\frac{1}{D}{\frac{ -\phi_d^2 +  2 f_d(\vx)\phi_d - f_d^2(\vx)}{2\sigma_d(\vx)^2}} \\
&= \sum_{d=1}^D\frac{1}{D}{\frac{ -(\phi_d-f_d(\vx))^2}{2\sigma_d(\vx)^2}}\\
&= -{D}_\textrm{SHAP}
\end{align}

\paragraph{The expected value of $\hatdshap$.}
We compute the overall expected value of $\hatdshap$,
using the law of total expectation:
\begin{align}
    \mathbb{E}\left\{\hatdshap\right\} &= \mathbb{E}_{P(\mathcal{A,B})}\left\{\Ex{\p{s_1} \p{s_2} \textrm{U}({1, D})}{\hatdshap | \mathcal{A,B}}\right\} \\
    &= \p{\mathcal{A}} \left\{\Ex{\p{s_1} \p{s_2} \textrm{U}({1, D})}{\hatdshap | \mathcal{A}}\right\} + \p{\mathcal{B}} \left\{\Ex{\p{s_1} \p{s_2} \textrm{U}({1, D})}{\hatdshap | \mathcal{B}}\right\} \\
    &= \p{\mathcal{A}} \dshap + \p{\mathcal{B}} (-\dshap) \\
    &= \dshap (\p{\mathcal{A}} - \p{\mathcal{B}} ) \\
    &= \dshap \left(p^2 + (1 - p)^2 - 2p(1-p)  \right)\\
    &= \dshap \left(p^2 + 1 -  2p + p^2 - 2p + 2p^2  \right)\\
    &= \dshap \left(4p^2 - 4p + 1 \right)
\end{align}

First, we note that $\left(4p^2 - 4p + 1 \right) \geq 0$.
Hence, $\hatdshap$ is a biased estimator in general,
where the bias is strictly positive: $\left(4p^2 - 4p + 1 \right) \geq 0$.
Therefore, the gradients of $\dshap$ and $\hatdshap$ are always proportional to each other, and point to the same direction ---a key property for the optimization of the proposed loss.

Second, $\mathbb{E}\left\{\hatdshap\right\} = \dshap$ when $p=0$.
Hence, $\mathbb{E}\left\{\hatdshap\right\} = \dshap = 0$ only when $f_d(\vx) = \phi_d$.
Namely, $\hatdshap=\dshap=0$, only when the model's Shapley prior is centered exactly at the function values.

\section{PSI Learning: The Algortihm}\label{app.alg}

\begin{algorithm}[h]
\begin{algorithmic}
    \STATE {\bfseries Input:} Dataset $\mathcal{D}$, model parameters $\theta$, batch size $M$, and feature removal probability $p$
    \WHILE{not converged}
    \STATE \emph{\# 1. Sample features for marginal integral constraint (described by the binary matrix $\mR$ in Figure \ref{fig:net})}

    \STATE \quad Draw a subset of feature indices for each instance $i$ as: \\
    \qquad $\sS_i = \{d \vert \1_d = 1, \forall d \in [D]\}$,
    where  $\1_d \sim \mathrm{Bernoulli}(p)$ 
    
    \STATE \emph{\# 2. Sample (variable length) observed (empirical) data using $\sR_i$}

    \STATE \quad Draw $\{(\vx_{\sS_i,i}, y_i)\}_{i=1}^M \sim \mathcal{D}$

    \STATE \emph{\# 3. Compute stochastic estimate of $\hatdshap$}
    
    \STATE  \quad \# 3.a Sample per instance $s_1$ and $s_2$ coalitions (given $\sS_i$)
    
    \STATE \quad \quad Draw $\{{s_{i,1,{k_1}}}, s_{i,2, {k_2}}\}_{i=1}^M \sim \frac{|s|!(|\sS_i| - |s| - 1)!}{|\sS_i|!}$ for each data instance $i$ 
    
    \STATE \quad \quad \quad Note that we use $K_1=K_2=1$ (as described in Equation \ref{eq:dshap})
    
    \STATE \quad \# 3.b Sample feature indices $d \in \sS_i$

    \STATE \quad \quad Draw $\{d_i\}_{i=1}^M \sim \frac{1}{|\sS_i|}$ uniformly for each instance $i$, as described in Equation \ref{eq:dshap}

    \STATE \quad \# 3.c Compute $\hatdshap$ as in Equation~\ref{eq:dshap} \\
    \quad \quad \quad $\hatdshap\leftarrow$ Equation~\ref{eq:dshap}, using $s_{i,1,{k_1}}, s_{i,2,{k_2}}$, and $d_i$ for per-instance $i$
   
    \STATE \emph{\# 5. Compute PSI loss ($\gV$) and its gradients using the closed form solutions described in Section \ref{sssec:psi_elbo}}
    
    \STATE \quad ${g}$ $\leftarrow{}$ $\nabla_{\theta} \gV$
    
    \STATE \quad $\theta$ $\leftarrow{}$ Update using gradients ${g}$ 
    
   \ENDWHILE
   \STATE {\bfseries Output:} PSI model, with parameters $\theta$
\end{algorithmic}
\caption{Mini-batch, SGD-based variational PSI learning (using a batch size of $M \leq N$).
}\label{alg:learening}
\end{algorithm}

\newpage

\section{The MENN architecture: A Masking Operation Example}\label{app:egmask}
In this section, we demonstrate an illustrative example masking matrices and operations.
For simplicity, we use a four layer MENN, with hidden dimensions $[10,10,10]$, and input/output dimensions $3$ and $6$, respectively.
This set-up corresponds to two-dimensional embeddings per input feature.
We begin by defining our initial masking matrix $\rmM_1$:
\begin{equation} \rmM_1 =
\left[\begin{array}{cccccccccc} \rowcolor{red!20}
\sReencell 1 & \sReencell 1 &\sReencell 1 & 0 & 0 &  0 & 0 & 0 & 0 & 0 \\
\rowcolor{red!20} 0 & 0 & 0 & \sReencell 1 & \sReencell 1 &  \sReencell 1 & 0 & 0 & 0 & 0 \\
\rowcolor{red!20} 0 & 0 & 0 & 0 & 0 & 0 & \sReencell 1 & \sReencell 1 & \sReencell 1 &\sReencell 1 \end{array}\right]\nonumber
\end{equation}
Moving deeper in the network,
we define intermediate masking matrices $\rmM_k$ by repeating the rows of $\rmM_{1:k-1}$ as described in Section~\ref{sssec:MENN}:
\begin{equation} \rmM_2 =
\left[\begin{array}{cccccccccc} \rowcolor{red!20}
\sReencell 1 &\sReencell 1 &\sReencell 1 & 0 & 0 & 0 & 0 & 0 & 0 & 0 \\ \rowcolor{red!20}
\sReencell 1 & \sReencell 1 & \sReencell 1 & 0 & 0 & 0 & 0 & 0 & 0 & 0 \\ \rowcolor{red!20}
\sReencell 1 &\sReencell 1 & \sReencell 1 & 0 & 0 & 0 & 0 & 0 & 0 & 0\\ \rowcolor{red!20}
0 & 0 & 0 & \sReencell 1 & \sReencell 1 & \sReencell 1 & 0 & 0 & 0 & 0 \\ \rowcolor{red!20}
0 & 0 & 0 & \sReencell 1 & \sReencell 1 & \sReencell 1 & 0 & 0 & 0 & 0 \\ \rowcolor{red!20}
0 & 0 & 0 & \sReencell 1 & \sReencell 1 & \sReencell 1 & 0 & 0 & 0 & 0 \\ \rowcolor{red!20}
0 & 0 & 0 & 0 & 0 & 0 & \sReencell 1 & \sReencell 1 & \sReencell 1 & \sReencell 1 \\ \rowcolor{red!20}
0 & 0 & 0 & 0 & 0 & 0 & \sReencell 1 & \sReencell 1 & \sReencell 1 & \sReencell 1 \\ \rowcolor{red!20}
0 & 0 & 0 & 0 & 0 & 0 & \sReencell 1 & \sReencell 1 & \sReencell 1 & \sReencell 1 \\ \rowcolor{red!20}
0 & 0 & 0 & 0 & 0 & 0 & \sReencell 1 & \sReencell 1 & \sReencell 1 & \sReencell 1 \\ \rowcolor{red!20}
\end{array}\right]\nonumber
\end{equation}
\begin{equation} \rmM_3 =
\left[\begin{array}{cccccccccc} \rowcolor{red!20}
\sReencell 1 &\sReencell 1 &\sReencell 1 & 0 & 0 & 0 & 0 & 0 & 0 & 0 \\ \rowcolor{red!20}
\sReencell 1 & \sReencell 1 & \sReencell 1 & 0 & 0 & 0 & 0 & 0 & 0 & 0 \\ \rowcolor{red!20}
\sReencell 1 &\sReencell 1 & \sReencell 1 & 0 & 0 & 0 & 0 & 0 & 0 & 0\\ \rowcolor{red!20}
0 & 0 & 0 & \sReencell 1 & \sReencell 1 & \sReencell 1 & 0 & 0 & 0 & 0 \\ \rowcolor{red!20}
0 & 0 & 0 & \sReencell 1 & \sReencell 1 & \sReencell 1 & 0 & 0 & 0 & 0 \\ \rowcolor{red!20}
0 & 0 & 0 & \sReencell 1 & \sReencell 1 & \sReencell 1 & 0 & 0 & 0 & 0 \\ \rowcolor{red!20}
0 & 0 & 0 & 0 & 0 & 0 & \sReencell 1 & \sReencell 1 & \sReencell 1 & \sReencell 1 \\ \rowcolor{red!20}
0 & 0 & 0 & 0 & 0 & 0 & \sReencell 1 & \sReencell 1 & \sReencell 1 & \sReencell 1 \\ \rowcolor{red!20}
0 & 0 & 0 & 0 & 0 & 0 & \sReencell 1 & \sReencell 1 & \sReencell 1 & \sReencell 1 \\ \rowcolor{red!20}
0 & 0 & 0 & 0 & 0 & 0 & \sReencell 1 & \sReencell 1 & \sReencell 1 & \sReencell 1 \\ \rowcolor{red!20}
\end{array}\right]\nonumber
\end{equation}
and
\begin{equation} \rmM_4 =
\left[\begin{array}{cccccc} \rowcolor{red!20}
\sReencell 1 & \sReencell 1 & 0 & 0 & 0 & 0 \\ \rowcolor{red!20}
\sReencell 1 & \sReencell 1 & 0 & 0 & 0 & 0 \\ \rowcolor{red!20}
\sReencell 1 & \sReencell 1 & 0 & 0 & 0 & 0 \\ \rowcolor{red!20}
0 & 0 & \sReencell 1 & \sReencell 1 & 0 & 0 \\ \rowcolor{red!20}
0 & 0 & \sReencell 1 & \sReencell 1 & 0 & 0 \\ \rowcolor{red!20}
0 & 0 & \sReencell 1 & \sReencell 1 & 0 & 0 \\ \rowcolor{red!20}
0 & 0 & 0 & 0 & \sReencell 1 & \sReencell 1 \\ \rowcolor{red!20}
0 & 0 & 0 & 0 & \sReencell 1 & \sReencell 1 \\ \rowcolor{red!20}
0 & 0 & 0 & 0 & \sReencell 1 & \sReencell 1 \\ \rowcolor{red!20}
0 & 0 & 0 & 0 & \sReencell 1 & \sReencell 1 \\ \rowcolor{red!20}
\end{array}\right]\nonumber
\end{equation}
Observe that $\rmM_4$ distributes $2$ embedding dimensions per feature.

Now we compute the following matrix multiplication with non-zero elements of $\rmM_{1:k}$ denoting the information flow until layer $k$ \citep{germain2015made}:
\begin{equation} \rmM_{1:2} =
\left[\begin{array}{cccccccccc} \rowcolor{red!20}
\sReencell 3 & \sReencell 3 &\sReencell 3 & 0 & 0 &  0 & 0 & 0 & 0 & 0 \\
\rowcolor{red!20} 0 & 0 & 0 & \sReencell 3 & \sReencell 3 &  \sReencell 3 & 0 & 0 & 0 & 0 \\
\rowcolor{red!20} 0 & 0 & 0 & 0 & 0 & 0 & \sReencell 4 & \sReencell 4 & \sReencell 4 &\sReencell 4 \end{array}\right]\nonumber
\end{equation}
Notice that the columns of $\rmM_3$ have been constructed by repeating the rows of binarized $\rmM_{1:2}$ (compare) as described in the main manuscript (\ie rows of $\rmM_3$ are constructed by repeating the rows of $\rmM_{1:2}^\prime$, where each element in $\rmM_{1:2}^\prime$ is defined by $\rmM_{1:2}^\prime(d,l) = \1_{\rmM_{1:2}>0}(d,l)$).
We now compute $\rmM_{1:3}$:
\begin{equation} \rmM_{1:3} =\left[\begin{array}{cccccccccc} \rowcolor{red!20}
\sReencell 9 & \sReencell 9 &\sReencell 9 & 0 & 0 &  0 & 0 & 0 & 0 & 0 \\
\rowcolor{red!20} 0 & 0 & 0 & \sReencell 9 & \sReencell 9 &  \sReencell 9 & 0 & 0 & 0 & 0 \\
\rowcolor{red!20} 0 & 0 & 0 & 0 & 0 & 0 & \sReencell 16 & \sReencell 16 & \sReencell 16 &\sReencell 16 \end{array}\right]\nonumber
\end{equation}
We finally compute
\begin{align} \rmM_{1:4} & =
\left[\begin{array}{cccccc}  \rowcolor{red!20}
\sReencell 27 & \sReencell 27 & 0 & 0 & 0 & 0 \\ \rowcolor{red!20}
0 & 0 & \sReencell 27 & \sReencell 27 & 0 & 0 \\ \rowcolor{red!20}
0 & 0 & 0 & 0 & \sReencell 64 & \sReencell 64 \\ \rowcolor{red!20}
\rowcolor{red!20}
\end{array}\right]
= \left[\begin{array}{cccccc}  \rowcolor{red!20}
\sReencell \gamma_1 & \sReencell \gamma_1 & 0 & 0 & 0 & 0 \\ \rowcolor{red!20}
0 & 0 & \sReencell \gamma_2  & \sReencell \gamma_2 & 0 & 0 \\ \rowcolor{red!20}
0 & 0 & 0 & 0 & \sReencell \gamma_3 & \sReencell \gamma_3 \\ \rowcolor{red!20}
\rowcolor{red!20}
\end{array}\right]\nonumber
\end{align}
Notice that $\rmM_{1:4}$ allows for creating two dimensional embeddings as it permits information flow to two disjoint output neurons for each input feature:
\ie outputs one and two only depend on $x_1$,
outputs three and four only depend on $x_2$,
and output three only depends on $x_3$.

Observe how $\gamma_i$ values depend on how the $1$s are distributed in the masking matrices; an equal distribution would result in more uniform  $\gamma_i$ values.
In practice, instead of using a single $nint(.)$ function, we choose to round up or down, randomly.

\section{PSI evaluation: Experimental set-up and additional details}

\subsection{Simulated Data Generating Processes}\label{app:synth}
We describe below the data generating processes (DGPs) of the simulated datasets, where access to ground-truth feature attribution information is available.
We use non-trivial functions for the heteroscedastic noise, and impose complex feature-interactions.
We generate 8000 examples for each synthetic dataset.

\paragraph{Synthetic1.}
DGP is as described in Section \ref{sec:case1}:
($i$) Draw $x_1, x_2, x_3 \sim \textrm{U}(-4,4)$,
($ii$) Draw $f_1 \sim \N{2 + \expp{-{x_1}^2} -\frac{\sqrt{\pi}}{8}erf\{4\},0.6\cos{(0.03x_1)^{800}}}$, $f_2 \sim \N{1 + \sin{(-{x_2}^2)} + \frac{\sqrt{2\pi}}{8} S(4\sqrt{\frac{2}{\pi}}), 0.2|x_2|}$, and $f_3 = 3\cos(3x_3) + 4\sin(5x_3)$,
($iii$) Observe $y = f_1 + f_2 + f_3$.
Here, $S(.)$ is the Fresnel integral.

\paragraph{Synthetic2.}
($i$) Draw $x_1, x_2, x_3 \sim \textrm{U}(-4,4)$,
($ii$) Calculate $f_1 = \expp{-{x_1}^2} x_1$, $f_2 = 0.5  x_2\sin(x_2)$, $f_3 = \cos(3x_3) \sin(x_3)$,
($iii$) Observe $y = f_1 + f_2 + f_3$.

\paragraph{Synthetic3.}
($i$) Draw $x_1, x_2, x_3 \sim \textrm{U}(-4,4)$,
($ii$) Calculate $f_1 = 4\sin(x_1) + 2\sin(2x_1)$, $f_2 = 3\cos(3x_2) \sin(5x_2)$, $f_3 = \cos(2x_3) + {x_3}^2/7$, $f_{12} = \expp{-(x_1 + x_2)^2}$, $f_{13} = (x_1-3) x_3\sin(x_1)\cos(x_3) / 2$, $f_{23} = x_2  x_3 / 2$,
($iii$) Observe random $y \sim \N{f_1 + f_2 + f_3 +  f_{12} +  f_{13} +  f_{23}, 0.01}$.

\paragraph{Synthetic4.} 
 ($i$) Draw $x_1, x_2, x_3 \sim \textrm{U}(-4,4)$,
 ($ii$) Calculate $f_1 = \expp{-1/{x_1}^2} + \sin(100/x_1))$, $f_2 = \expp{-|\cos(|x_2|)+1/2\sin(2x_2)|}+x_2/4$, $f_3 =\tanh({x_3}^2)$, $f_{12} = \sin({x_1}^2 + {x_2}^2) / 2$,
 ($iii$) Observe $y = f_1 + f_2 + f_3 +  f_{12}$.

\paragraph{Synthetic5.}
($i$) Draw $x_1, x_2, x_3 \sim \textrm{U}(-4,4)$,
($ii$) Calculate  $f_1 = |x_1|/10 + {x_1}^2/10 + \sin(x_1)$, $f_2 = \cos(5x_2) + \sin(2x_2) + x_2$, $f_3 = \expp{-{x_3}^{100}}$, $f_{12} = 5({x_1}^{10} + {x_2}^{10})^{1/10} / 2$, $f_{23} = 5 | \sin(x_3 x_2) \cos(x_3 x_2)| / 2$,
($iii$) Observe random $y \sim \N{f_1 + f_2 + f_3 +  f_{12}, 0.01}$. 

The observed data consists of only $\{\vx, y\}_{i=1}^N$ pairs for all simulated datasets.

\subsection{Real-world Datasets}\label{app:realworld}
\paragraph{Regression Datasets.} We describe here the datasets used in regression tasks:
\begin{enumerate}
    \item \textbf{Parkinsons telemonitoring (PKSN):} The dataset features voice data from 42 Parkinson's patients in the early stages, captured over six months using a telemonitoring device for distant symptom monitoring. These autonomous recordings, taken at their homes, total 5875 data points. The task is to predict Clinician's motor UPDRS score \citep{tsanas2009accurate}.
    \item \textbf{Medical expenses (MED):} 1,338 patients from United States \citep{lantz2019machine}. The task is to predict medical expenses. Here, the unit of measurement is United States Dolars (USD).
    \item \textbf{Bike sharing (BIKE):} The dataset of 17,389 entries aims to predict total bike rentals, covering both casual and registered users \citep{misc_bike_sharing_dataset_275}.
    \item \textbf{Greenhouse gas observing network (GAS):} The dataset features time series of greenhouse gas concentrations across 2,921 grid cells in California, generated using the WRF-Chem simulation model. The goal is to predict the green house gas concentration \citep{misc_greenhouse_gas_observing_network_328}.
\end{enumerate}

\paragraph{Classification Datasets.} These are the datasets used in classification tasks:
\begin{enumerate}
    \item \textbf{Heloc (FICO):} 9,861 credit applications \citep{fico}. The task is to classify risk performance. The good risk performance is represented by 1.
    \item \textbf{Spambase (SPAM):} The classification task involves 4601 instances aimed at determining whether a given email is classified as spam or not \citep{misc_spambase_94}. Spam emails are represented by 1.
    \item \textbf{Intensive care unit (ICU):} 15,830 intensive care unit (ICU) cases from Argentina, Australia, New Zealand, Sri Lanka, Brazil, and United States \citep{icu2020data}. The task is to classify ICU mortality. The ICU mortality is represented by 1.
    \item \textbf{Census income (CENS):} Demographic (such as age, education etc.) information of  48,842 people. The task is to predict if the person earns more than 50,000 USD a year \citep{misc_census_income_20}. Income of more than 50,000 USD a year is represented by 1.
\end{enumerate}
\subsection{Baseline Models}\label{app:baseline}
\begin{enumerate}
\item \textbf{Bayesian Linear/Logistic Regression (LIN):} The simplest, yet explainable model. Linear models quantify the feature importance and feature importance uncertainty through the model coefficients. We use Bayesian linear regression for regression and Bayesian logistic regression for classification tasks. We use the implementation by \citet{pedregosa2011scikit}.
\item \textbf{Explainable Boosting Machines (EBM):} The state-of-the-art explainable additive model which uses ensemble shallow trees with boosting to model each component \citep{lou2013accurate,caruana2015intelligible}. We use the implementation by \citet{nori2019interpretml}.
\item \textbf{Random Forest (RF):} Another ensemble learning algorithm that works by building multiple trees independently using bagging, and averaging the predictions of each individual tree. We use the implementation by \citet{pedregosa2011scikit}.
\item \textbf{Light Gradient Boosting Machines (LGBM):} LGBM uses a leaf-wise growth strategy, prioritizing splits that result in the largest decrease in loss, whereas most traditional tree-based algorithms grow trees level-wise. While leaf-wise growth can achieve better accuracy, it might also lead to overfitting, especially on smaller datasets. Thus, careful hyperparameter tuning, including regularization, is essential when using LGBM \citep{ke2017lightgbm}.
\item \textbf{Gradient Boosted Trees (XGB):} A well-known ensemble learning algorithm that combines several week learners. In particular, each weak learner is trained to improve the ensemble performance, one at a time. We use the implementation by \citet{chen2016xgboost}.
\item \textbf{Deep Neural Network (DNN):} Universal function approximators, where DNNs relate input and output through non-linear mappings.
\end{enumerate}

\subsection{Baseline Hyperparameters}\label{app:hyper}
We sample 300 hyperparameters, train each model on test set and evaluate on the validation set to find the optimum parameters for learning a train-test split.
We then test the models on the remaining test fold. We do this 5 times.
We use NVIDIA RTX 2080 graphics card for training neural models.

The hyperparameter search space of all models are as follows:

\begin{minipage}[t]{0.5\textwidth}
LIN
\begin{itemize}
\item Regressor

param\_grid = \{

"C": [5e-2, 1e-1, 5e-1, 1], 

"max\_iter":[5000]

\}
\end{itemize}
\end{minipage}
\begin{minipage}[t]{0.5\textwidth}
$\;$
\begin{itemize}
\item Classsifier

param\_grid = \{

"C": [5e-2, 1e-1, 5e-1, 1], 

"max\_iter":[5000]

\}
\end{itemize}
\end{minipage}

\vspace{5mm}
\begin{minipage}[t]{0.5\textwidth}
EBM
\begin{itemize}
\item Regressor

param\_grid = \{

"outer\_bags": [8, 25, 75, 100],

"inner\_bags":[0, 1, 2, 5, 10]

\}
\end{itemize}
\end{minipage}
\begin{minipage}[t]{0.5\textwidth}
$\;$
\begin{itemize}
\item Classsifier

param\_grid = \{

"outer\_bags": [8, 25, 75, 100],

"inner\_bags":[0, 1, 2, 5, 10]

\}
\end{itemize}
\end{minipage}

\vspace{5mm}
\begin{minipage}[t]{0.5\textwidth}
RF
\begin{itemize}
\item Regressor

 param\_grid = \{
 
"ccp\_theta": [0.0, 1e-1, 1e-2],

"max\_depth": [None, 4, 8, 16, 40, 100],

"min\_samples\_leaf": [1, 3, 5, 10],

"min\_samples\_split": [2, 4, 6, 12],

"n\_estimators": [100, 200, 600, 800]

\}
\end{itemize}
\end{minipage}
\begin{minipage}[t]{0.5\textwidth}
$\;$
\begin{itemize}
\item Classsifier

param\_grid = \{
 
"ccp\_theta": [0.0, 1e-1, 1e-2],

"max\_depth": [None, 4, 8, 16, 40, 100],

"min\_samples\_leaf": [1, 3, 5, 10],

"min\_samples\_split": [2, 4, 6, 12],

"n\_estimators": [100, 200, 600, 800]

\}
\end{itemize}
\end{minipage}

\vspace{5mm}
\begin{minipage}[t]{0.5\textwidth}
LGBM
\begin{itemize}
\item Regressor

param\_grid = \{

"num\_leaves": [31, 50, 70, 100],    

"max\_depth": [-1, 5, 7, 10],        

"learning\_rate": [0.001, 0.01, 0.05, 0.1], 

"n\_estimators": [100, 200, 500],   


"min\_split\_gain": [0.0, 0.1, 0.5],  

"min\_child\_weight": [1e-3, 1e-2, 1e-1, 1], 

"min\_child\_samples": [20, 30],    

"subsample": [0.8, 0.9, 1.0],       

"colsample\_bytree": [0.7, 0.8, 0.9, 1.0], 

"reg\_theta": [0, 1, 2],            

"reg\_lambda": [0, 1, 2],          

"boosting\_type": ['gbdt', 'dart'],

\}
\end{itemize}
\end{minipage}
\begin{minipage}[t]{0.5\textwidth}
$\;$
\begin{itemize}
\item Classsifier

param\_grid = \{

"num\_leaves": [31, 50, 70, 100],    

"max\_depth": [-1, 5, 7, 10],        

"learning\_rate": [0.001, 0.01, 0.05, 0.1], 

"n\_estimators": [100, 200, 500],   


"min\_split\_gain": [0.0, 0.1, 0.5],  

"min\_child\_weight": [1e-3, 1e-2, 1e-1, 1], 

"min\_child\_samples": [20, 30],    

"subsample": [0.8, 0.9, 1.0],       

"colsample\_bytree": [0.7, 0.8, 0.9, 1.0], 

"reg\_theta": [0, 1, 2],            

"reg\_lambda": [0, 1, 2],          

"boosting\_type": ['gbdt', 'dart'],

\}
\end{itemize}
\end{minipage}

\vspace{5mm}
\begin{minipage}[t]{0.5\textwidth}
XGB
\begin{itemize}
\item Regressor
param\_grid = \{

"learning\_rate"    : [0.05, 0.10, 0.15, 0.30],

"max\_depth"        : [3, 5, 8, 15],

"min\_child\_weight" : [1, 3, 7],

"theta"            : [0.0, 0.1, 0.3],

"colsample\_bytree" : [0.3, 0.4, 0.5],

'ccp\_theta': [0.0,1e-3,1e-2],

"min\_impurity\_decrease": [0, 1e-1]

\}
\end{itemize}
\end{minipage}
\begin{minipage}[t]{0.5\textwidth}
$\;$
\begin{itemize}
\item Classsifier

param\_grid = \{

"learning\_rate"    : [0.05, 0.10, 0.15, 0.30],

"max\_depth"        : [3, 5, 8, 15],

"min\_child\_weight" : [1, 3, 7],

"theta"            : [0.0, 0.1, 0.3],

"colsample\_bytree" : [0.3, 0.4, 0.5],

'ccp\_theta': [0.0,1e-3,1e-2],

"min\_impurity\_decrease": [0, 1e-1]

\}
\end{itemize}
\end{minipage}

\begin{minipage}[t]{0.5\textwidth}
DNN
\begin{itemize}
\item Regressor

param\_grid = \{
"batch\_size": [1024, 512],

"lr":[5e-4, 1e-3, 2e-3],

"act": ['relu', 'snake', 'elu'],

"norm":[None, 'layer', 'batch'],

"n\_layers":[2, 3, 4, 5],

"d\_hid":[25, 50, 75, 100, 200],

"weight\_decay": [0, 1e-10, 1e-8, 1e-6],

"dropout": [0, 0.2, 0.4, 0.5],

\}
\end{itemize}
\end{minipage}
\begin{minipage}[t]{0.5\textwidth}
$\;$
\begin{itemize}
\item Classsifier

param\_grid = \{
"batch\_size": [1024, 512],

"lr":[5e-4, 1e-3, 2e-3],

"act": ['relu', 'snake', 'elu'],

"norm":[None, 'layer', 'batch'],

"n\_layers":[2, 3, 4, 5],

"d\_hid":[25, 50, 75, 100, 200],

"weight\_decay": [0, 1e-10, 1e-8, 1e-6],

"dropout": [0, 0.2, 0.4, 0.5],

\}
\end{itemize}
\end{minipage}

\begin{minipage}[t]{0.5\textwidth}
PSI
\begin{itemize}
\item Regressor

param\_grid = \{
"batch\_size": [1024, 512, 256],

"lr":[2e-3, 1e-3, 5e-4],

"beta":[10, 1, 0.1, 0.01, 0.001],

"act": ['relu', 'snake', 'elu'],

"arch":[FFNN, MENN'],

"norm":[None, 'layer'],

"n\_layers":[5, 4, 3, 2],

"d\_hid":[300, 200, 150, 100],

"weight\_decay": [1e-6, 1e-7, 1e-8, 0],

"dropout": [0],


"p\_missing":[1/2, 2/3, 'shapley'],


\}
\end{itemize}
\end{minipage}
\begin{minipage}[t]{0.5\textwidth}
$\;$
\begin{itemize}
\item Classsifier

param\_grid = \{
"batch\_size": [1024, 512, 256],

"lr":[2e-3, 1e-3, 5e-4],

"beta":[10, 1, 0.1, 0.01, 0.001],

"act": ['relu', 'snake', 'elu'],

"arch":[FFNN, MENN'],

"norm":[None, 'layer'],

"n\_layers":[5, 4, 3, 2],

"d\_hid":[300, 200, 150, 100],

"weight\_decay": [1e-6, 1e-7, 1e-8, 0],

"dropout": [0],


"p\_missing":[1/2, 2/3, 'shapley'],

\}
\end{itemize}
\end{minipage}

"p\_missing" refers to the probability of introducing a feature to the network during training  \citep{jethani2021fastshap}.



\subsection{Predictive performance results}\label{app:results}
\begin{table}[h]
\centering
\begin{tabular}{c|ccc|cccc|}
\cline{2-8}
\cellcolor[HTML]{FFFFFF}                           & \multicolumn{3}{c|}{Interpretable}                                                                                                                                                                                            & \multicolumn{4}{c|}{Black-box}                                                                                            \\ \hline
\multicolumn{1}{|c|}{Data}                         & \cellcolor[HTML]{EFEFEF}{{PSI}}                                           & \cellcolor[HTML]{EFEFEF}LIN                                         & \cellcolor[HTML]{EFEFEF}EBM                                                 & RF                           & LGBM                         & XGB                          & DNN                          \\ \hline
\multicolumn{1}{|c|}{\cellcolor[HTML]{CBCEFB}PKSN} & \cellcolor[HTML]{EFEFEF}{ {0.026$^{\pm 0.002}$}} & \cellcolor[HTML]{EFEFEF}{\color[HTML]{000000} 0.867$^{\pm 0.009}$}  & \cellcolor[HTML]{EFEFEF}0.195$^{\pm 0.002}$                                 & 0.040$^{\pm 0.006}$          & 0.072$^{\pm 0.009}$          & 0.063$^{\pm 0.002}$          & 0.111 $^{\pm 0.005}$         \\
\multicolumn{1}{|c|}{\cellcolor[HTML]{CBCEFB}MED}  & \cellcolor[HTML]{EFEFEF}{{0.447$^{\pm 0.018}$}} & \cellcolor[HTML]{EFEFEF}{\color[HTML]{000000} 0.608$^{\pm 0.015}$}  & \cellcolor[HTML]{EFEFEF}{{0.447$^{\pm 0.021}$}} & 0.448$^{\pm 0.018}$          & {0.447$^{\pm 0.020}$} & 0.454$^{\pm 0.016}$          & 0.467$^{\pm 0.005}$          \\
\multicolumn{1}{|c|}{\cellcolor[HTML]{CBCEFB}BIKE} & \cellcolor[HTML]{EFEFEF}{0.019$^{\pm 0.001}$}          & \cellcolor[HTML]{EFEFEF}{\color[HTML]{000000} 0.518$^{\pm 0.004}$}  & \cellcolor[HTML]{EFEFEF}0.038$^{\pm0.001}$                                  & {0.008$^{\pm 0.001}$} & 0.017$^{\pm 0.002}$          & 0.013$^{\pm 0.001}$          & 0.019$^{\pm 0.001}$          \\
\multicolumn{1}{|c|}{\cellcolor[HTML]{CBCEFB}GAS}  & \cellcolor[HTML]{EFEFEF}0.269$^{\pm 0.045}$                                 & \cellcolor[HTML]{EFEFEF}{\color[HTML]{000000} 60.93$^{\pm 60.060}$} & \cellcolor[HTML]{EFEFEF}{0.153$^{\pm 0.016 }$}         & 0.151$^{\pm 0.015}$          & 0.152$^{\pm 0.012 }$         & {0.150$^{\pm 0.021}$} & 0.256$^{\pm 0.019}$          \\
\hline
\multicolumn{1}{|c|}{\cellcolor[HTML]{FFCCC9}FICO} & \cellcolor[HTML]{EFEFEF}{{0.771$^{\pm 0.004}$}} & \cellcolor[HTML]{EFEFEF}{\color[HTML]{000000} 0.766$^{\pm 0.004}$}  & \cellcolor[HTML]{EFEFEF}0.771$^{\pm 0.004}$                                 & 0.770$^{\pm 0.003}$          & 0.773$^{\pm 0.005}$          & 0.773$^{\pm 0.004}$          & 0.771$^{\pm 0.007}$          \\
\multicolumn{1}{|c|}{\cellcolor[HTML]{FFCCC9}SPAM} & \cellcolor[HTML]{EFEFEF}{0.977$^{\pm 0.006}$}          & \cellcolor[HTML]{EFEFEF}{\color[HTML]{000000} 0.946$^{\pm 0.006}$}  & \cellcolor[HTML]{EFEFEF}{0.977$^{\pm 0.003}$}          & {0.983$^{\pm 0.002}$} & 0.982$^{\pm 0.002}$          & 0.981$^{\pm 0.003}$          & 0.967$^{\pm 0.005}$          \\
\multicolumn{1}{|c|}{\cellcolor[HTML]{FFCCC9}ICU}  & \cellcolor[HTML]{EFEFEF}0.872$^{\pm 0.004}$                                 & \cellcolor[HTML]{EFEFEF}{\color[HTML]{000000} 0.851$^{\pm 0.004}$}  & \cellcolor[HTML]{EFEFEF}{0.872$^{\pm 0.004}$}          & 0.869$^{\pm 0.004}$          & {0.874$^{\pm 0.003}$} & {0.874$^{\pm 0.003}$} & 0.860$^{\pm 0.004}$          \\
\multicolumn{1}{|c|}{\cellcolor[HTML]{FFCCC9}CENS} & \cellcolor[HTML]{EFEFEF}0.803$^{\pm 0.003}$                                 & \cellcolor[HTML]{EFEFEF}{\color[HTML]{000000} 0.768$^{\pm 0.003}$}  & \cellcolor[HTML]{EFEFEF}{0.825$^{\pm 0.003}$}          & 0.800$^{\pm 0.002}$          & 0.830$^{\pm 0.003}$          & 0.831$^{\pm 0.004}$          & 0.788$^{\pm 0.003}$          \\ \hline
\end{tabular}
\caption{PSI and baseline comparison across real-world datasets with standard deviation results.}
\end{table}
\begin{table}[h]
\centering
\begin{tabular}{|c|c|ccccc|}
\hline
Net                                              & $D\beta / 2$ & synth1         & synth2         & synth3         & synth4         & synth5         \\ \hline
\rowcolor[HTML]{EFEFEF} 
\cellcolor[HTML]{EFEFEF} & 0.001    & 0.605$^{\pm 0.011}$& 	{0.004}$^{\pm 0.000}$& 	0.125$^{\pm 0.006}$& 	0.669$^{\pm 0.017}$& 	0.419$^{\pm 0.003}$\\
\rowcolor[HTML]{EFEFEF} 
\cellcolor[HTML]{EFEFEF} & 0.01   & 0.611$^{\pm 0.014}$&	{0.004}$^{\pm 0.000}$&	{0.120}$^{\pm 0.002}$&	0.669$^{\pm 0.018}$&	{0.416}$^{\pm 0.010}$\\
\rowcolor[HTML]{EFEFEF} 
\cellcolor[HTML]{EFEFEF} & 0.1     & 0.595$^{\pm 0.011}$&	{0.004}$^{\pm 0.000}$&	0.221$^{\pm 0.006}$&	0.673$^{\pm 0.016}$&	0.417$^{\pm 0.003}$\\
\rowcolor[HTML]{EFEFEF} 
\multirow{-4}{*}{\cellcolor[HTML]{EFEFEF}MENN} & 1   & {0.582}$^{\pm 0.009}$&	{0.004}$^{\pm 0.000}$&	1.118$^{\pm 0.027}$&	{0.579}$^{\pm 0.029}$&	0.446$^{\pm 0.005}$\\ 
\hline
& 0.001   & 0.642$^{\pm 0.012}$&	0.071$^{\pm 0.063}$&	0.368$^{\pm 0.155}$&	0.689$^{\pm 0.022}$&	0.433$^{\pm 0.008}$  \\
& 0.01    & 0.640$^{\pm 0.009}$&	0.128$^{\pm 0.074}$&	0.234$^{\pm 0.078}$&	0.677$^{\pm 0.018}$&	0.435$^{\pm 0.007}$  \\
& 0.1     & 0.624$^{\pm 0.011}$&	0.103$^{\pm 0.059}$&	0.174$^{\pm 0.006}$&	0.674$^{\pm 0.014}$&	0.431$^{\pm 0.007}$  \\
\multirow{-4}{*}{FFNN}& 1& 0.599$^{\pm 0.008}$ &	0.186$^{\pm 0.110}$ &	0.897$^{\pm 0.024}$ &	0.660$^{\pm 0.017}$ &	0.477$^{\pm 0.005}$\\ \hline
\end{tabular}%
\caption{Comparison of FFNN and MENN architectures with fold-wise standard deviation results.}
\end{table}

\subsection{Jeffreys  Divergence}\label{app:jsd}

The metric that we are interested in is:
\begin{equation}
    \frac{1}{D}\sum_{j\in [D]} \sum_{s\in \mathcal{A}  \setminus  \{j\}} \p{s} \times \text{J-Divergence(Model Marginals, Empirical Marginals)} ,
\end{equation}
which calculates the distance between marginals for every possible coalition that can be computed for any given feature $j$, weighted according to its Shapley value weights.
This allows for arriving at a single metric calculated by weighting the distance between ground truth marginals and model marginals by their occurrence in Shapley value calculations.

Precisely, we follow the steps below:

Define $dist = 0$. 

For every feature $j \in [D]$ and $s \in s\in [D]  \setminus  \{j\}$:

\begin{enumerate}
    \item \textbf{Empirical sampling:} Sample $[y, \vx] \sim \sD$ from the empirical data joint distribution. 
    \item \textbf{Empirical marginals:} Remove the columns $k \notin s$ from  $[y, \vx] $ to obtain empirical marginals $[y, \vx_s]$.
    \item \textbf{Sample from model marginals:} Sample $\hat{y} \sim \cp{y}{\vx_s}$ from PSI using MENN/FFNN, and denote $[\hat{y}, \vx_s]$ as PSI marginal samples. 
    \item \textbf{Measure distance:} Compute $dist = dist + \frac{1}{D}\p{s} \times \text{J-Divergence($p\left([{y}, \vx_s]\right), p\left([\hat{y}, \vx_s]\right)$)}$.
\end{enumerate}

\paragraph{Computing the $\text{J-Divergence($p\left([{y}, \vx_s]\right), p\left([\hat{y}, \vx_s]\right)$)}$.}
We estimate $p\left([{y}, \vx_s]\right)$ and $p\left([\hat{y}, \vx_s]\right)$ 
by discretizing the empirical distribution defined by the available model and data samples,
and fitting a histogram of different bin sizes to both $[y, \vx]$ and $[\hat{y}, \vx_s]$.
For such a discretized distribution, the J-Divergence has a closed-form solution (since KL-terms can be computed analytically),
that can be computed easily:

\begin{equation}
\frac{1}{2} \left(\kl{q}{p} + \kl{p}{q}\right), \text{ where } \kl{q}{p} = \sum q \log p,
\end{equation}

where sum is carried out over the histogram bins. $p$ and $q$ represent each bin's probability.

We apply this same procedure for various, random bin sizes:
we sample 20 bin sizes from  $\textrm{U}(10, 200)$.
We then take the average over divergences over all bin-size histograms,
resulting in a single number estimator for the J-Divergence metric of interest. 

The error bars shown in Figure \ref{fig:amrginals} were generated by applying the same procedure described above to 5 different models trained of different train-test splits
(\ie 5 fold cross-validation). 

\section{Remark 3.3: empirical evidence of convergence}
\label{app:remark33}

We provide empirical evidence of Remark 3.3 in Figure~\ref{fig:goingtozero}:
the empirical estimates $\Ex{\vx \sim \sD}{f_d(\vx)}$ converge to zero as the model learns for both MENN and FFNN architectures
---an effect most prominent for $\beta=1$.

\begin{figure*}[h]
\centering
\includegraphics[width=\linewidth]{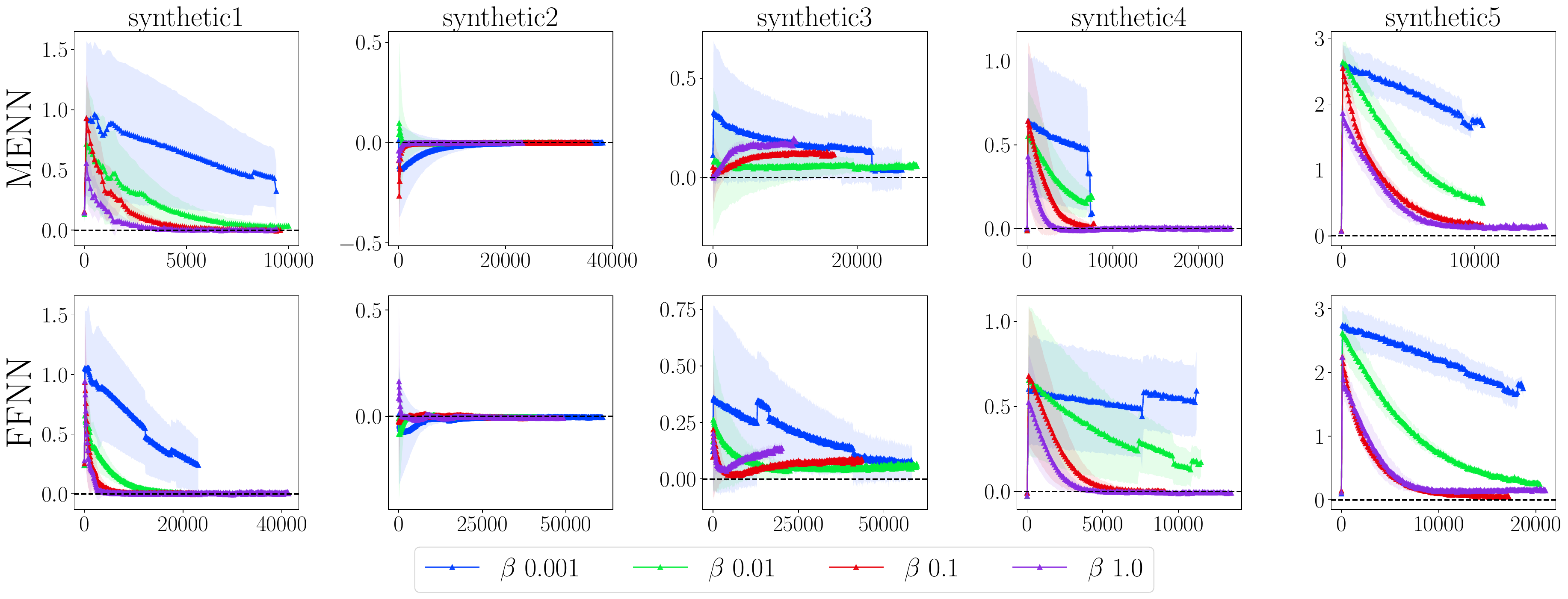}
\caption{Average of the $f_d$ over the empirical dataset, $\Ex{\vx \sim \sD}{f_d(\vx)}$, w.r.t. training epochs.
Shaded regions are the error values from different cross-validation folds.
}
\label{fig:goingtozero}
\end{figure*}

\end{document}